\def\eqref#1{equation~\ref{#1}}
\def\1{\bm{1}}
\def\mI{{\bm{I}}}
\DeclareMathAlphabet{\mathsfit}{\encodingdefault}{\sfdefault}{m}{sl}
\SetMathAlphabet{\mathsfit}{bold}{\encodingdefault}{\sfdefault}{bx}{n}
\DeclareMathOperator*{\argmin}{arg\,min}
\DeclareMathOperator{\Tr}{Tr}
\theoremstyle{plain}
\newtheorem{theorem}{Theorem}[section]
\newtheorem{lemma}[theorem]{Lemma}
\newtheorem{corollary}[theorem]{Corollary}
\theoremstyle{definition}
\newtheorem{definition}[theorem]{Definition}
\theoremstyle{remark}
\newtheorem{Alemma}{Auxiliary Lemma}
\title{Out-Of-Domain Unlabeled Data Improves \\ Generalization}
\author{%
  Amir~Hossein~Saberi$~^{\ddagger*}$
  \and
  Amir~Najafi$~^{\dagger}$
  \and
  Alireza~Heidari$~^{\dagger}$
  \and
  Mohammad~Hosein~Movasaghinia$~^{\dagger}$
  \and
  Seyed~Abolfazl~Motahari$~^{\dagger}$
  \and
  Babak~H.~Khalaj$^{\ddagger}$
  \thanks{Corresponding author:~\texttt{motahari@sharif.edu}.}
}
\begin{document}

\maketitle

\vspace*{-9mm}
\begin{center}
$^*~$Department of Electrical Engineering,\\
$^\dagger~$Department of Computer Engineering,\\
$^\ddagger$Sharif Center for Information Systems and Data Science,\\
Sharif University of Technology, Tehran, Iran
\end{center}
\vspace*{3mm}

\begin{abstract}
We propose a novel framework for incorporating unlabeled data into semi-supervised classification problems, where scenarios involving the minimization of either i) adversarially robust or ii) non-robust loss functions have been considered. Notably, we allow the unlabeled samples to deviate slightly (in total variation sense) from the in-domain distribution. The core idea behind our framework is to combine Distributionally Robust Optimization (DRO) with self-supervised training. As a result, we also leverage efficient polynomial-time algorithms for the training stage. From a theoretical standpoint, we apply our framework on the classification problem of a mixture of two Gaussians in $\mathbb{R}^d$, where in addition to the $m$ independent and labeled samples from the true distribution, a set of $n$ (usually with $n\gg m$) out of domain and unlabeled samples are given as well. Using only the labeled data, it is known that the generalization error can be bounded by $\propto\left(d/m\right)^{1/2}$. However, using our method on both isotropic and non-isotropic Gaussian mixture models, one can derive a new set of analytically explicit and non-asymptotic bounds which show substantial improvement on the generalization error compared to ERM. Our results underscore two significant insights: 1) out-of-domain samples, even when unlabeled, can be harnessed to narrow the generalization gap, provided that the true data distribution adheres to a form of the ``cluster assumption", and 2) the semi-supervised learning paradigm can be regarded as a special case of our framework when there are no distributional shifts. We validate our claims through experiments conducted on a variety of synthetic and real-world datasets.
\end{abstract}


\section{introduction}
\label{sec:introduction}

Semi-supervised learning has long been a focal point in the machine learning literature, primarily due to the cost-effectiveness of utilizing unlabeled data compared to labeled counterparts. However, unlabeled data in various domains, such as medicine, genetics, imaging, and audio processing, often originates from diverse sources and technologies, leading to distributional differences between labeled and unlabeled samples. Concurrently, the development of robust classifiers against adversarial attacks has emerged as a vibrant research area, driven by the rise of large-scale neural networks \citep{goodfellow2014explaining,biggio2018wild}. While the primary objective of these methods is to reduce model sensitivity to minor adversarial perturbations, recent observations suggest that enhancing adversarial robustness may also improve the utilization of unlabeled samples \citep{najafi2019robustness,miyato2018virtual}.

This paper aims to demonstrate the efficacy of incorporating out-of-domain unlabeled samples to decrease the reliance on labeled in-domain data. To achieve this, we propose a novel framework inspired by a fusion of concepts from adversarial robustness and self-training. Specifically, we introduce a unique constraint to the conventional Empirical Risk Minimization (ERM) procedure, focusing exclusively on the unlabeled part of the dataset. Our theoretical and experimental analyses show that the inclusion of unlabeled data reduces the generalization gap for both robust and non-robust loss functions. Importantly, our alternative optimization criteria are computationally efficient and can be solved in polynomial time. We have implemented and validated the effectiveness of our method on various synthetic and real-world datasets.

From a theoretical standpoint, akin to prior research \citep{schmidt2018adversarially,carmon2019unlabeled, zhai2019adversarially, alayrac2019labels}, we also address the binary classification problem involving two Gaussian models in $\mathbb{R}^d$. This problem has been the center of attention in several recent works on theoretical analysis of both semi-supervised and/or adversarially robust learning paradigms. Despite several recent theoretical investigations, the precise trade-off between the sizes of labeled ($m$) and unlabeled ($n$) data, even in this specific case, remains incomplete. A number of works have bounded the labeled sample complexity under the assumption of an asymptotically large $n$ \citep{kumar2020understanding}, while another series of papers have analyzed this task from a completely unsupervised viewpoint. We endeavor to fill this gap by providing the first empirical trade-off between $m$ and $n$, even when unlabeled data originates from a slightly perturbed distribution. We derive explicit bounds for both robust and non-robust losses of linear classifiers in this scenario. Our results show that as long as $n\ge \Omega\left(m^2/d\right)$, our proposed algorithm surpasses traditional techniques that solely rely on labeled data. We also consider the more general case of non-isotropic Gaussian models, as explored in previous studies.

The remainder of this paper is structured as follows: Section \ref{sec:literature} provides an overview of related works in distributionally robust optimization and semi-supervised learning. Section \ref{NotationAndDefinition} introduces our notation and definitions. In Section \ref{Contributions}, we discuss the contributions made by our work. In Section \ref{ProposedMethod}, we present our novel method, followed by a theoretical analysis in Section \ref{TheoreticalAnalysis}. Section \ref{Experiments} showcases our experimental validations, further supporting our theoretical findings. Finally, we draw conclusions in Section \ref{conclusion}.


\subsection{prior works}
\label{sec:literature}

One of the challenges in adversarially robust learning is the substantial difficulty in increasing the {\it {robust}} accuracy compared to achieving high accuracy in non-robust scenarios \citep{carlini2017towards}. A study by \citet{schmidt2018adversarially} posited that this challenge arises from the larger sample complexity associated with learning robust classifiers in general. Specifically, they presented a simple model where a good classifier with high standard (non-robust) accuracy can be achieved using only a single sample, while a significantly larger training set is needed to attain a classifier with high robust accuracy. Recent works \citep{carmon2019unlabeled, zhai2019adversarially, alayrac2019labels} demonstrated that the gap in sample complexity between robust and standard learning, as outlined by \citet{schmidt2018adversarially} in the context of a two-component Gaussian mixture model, can be bridged with the inclusion of unlabeled samples. Essentially, unlabeled samples can be harnessed to mitigate classification errors even when test samples are perturbed by an adversary. Another study by \citet{najafi2019robustness} achieved a similar result using a different definition of adversarial robustness and a more comprehensive data generation model. Their approach involves the use of `self-training' to assign soft/hard labels to unlabeled data, contrasting our approach, where unlabeled data is exclusively utilized to constrain the set of classifiers, aiming to avoid crowded regions. While DRO serves as a tool in our approach, it is not necessarily the primary objective. In \citet{deng2021improving}, authors showed that in the setting of \citet{schmidt2018adversarially}, out-of-domain unlabeled samples improve adversarial robustness.

Theoretical analysis of Semi-Supervised Learning (SSL) under the so-called {\it {cluster assumption}} has been a long-studied task \citep{rigollet2007generalization}. However, beyond \citet{najafi2019robustness}, several recent methods leveraging DRO for semi-supervised learning have emerged \citep{blanchet2020semi, frogner2021incorporating}. Notably, \citet{frogner2021incorporating} shares similarities with \citet{najafi2019robustness}; however, instead of assigning artificial labels to unlabeled samples, \citet{frogner2021incorporating} employs them to delimit the ambiguity set and enhance understanding of the marginals. Our work primarily focuses on the {\it {robustness}} aspect of the problem rather than advancing the general SSL paradigm.

Defense mechanisms against adversarial attacks usually consider two types of adversaries: i) point-wise attacks similar to \citet{miyato2018virtual,nguyen2015deep,szegedy2013intriguing}, and ii) distributional attacks \citep{staib2017distributionally, shafieezadeh2015distributionally, mohajerin2018data}, where in the case of the latter adversary can change the distribution of data up to a predefined budget. It has been shown that Distributionally Robust Learning (DRL) achieves a superior robustness compared to point-wise methods \citep{staib2017distributionally}. \citet{namkoong2017variance} utilized DRL in order to achieve a balance between the bias and variance of classifier's error, leading to faster rates of convergence compared to empirical risk minimization even in the {\it {non-robust}} case. In DRL, the learner typically aims to minimize the loss while allowing the data distribution to vary within an uncertainty neighborhood. The central idea used by \citet{namkoong2017variance} was to regulate the diameter of this uncertainty neighborhood based on the number of samples. \citet{gao2022finite} achieved similar results in DRL while utilizing the {\it {Wasserstein}} metric to define the perturbation budget for data distribution. Based on the above arguments, we have also utilized DRL is the main tool in developing our proposed framework.


\subsection{Main Contributions}
\label{Contributions}
We introduce a novel integration of DRO and Semi-Supervised Learning (SSL), leveraging out-of-domain unlabeled samples to enhance the generalization bound of learning problem. 
Specifically, we theoretically analyze our method in the setting where samples are generated from a Gaussian mixture model with two components, which is a common assumption in several theoretical analyses in this field. For example, a simpler format, when two Gaussians are isotropic and well-separated, is the sole focus of many papers such as \citet{schmidt2018adversarially, carmon2019unlabeled, alayrac2019labels}.
Some of our notable contributions and improvements over recent works in the field include:

(i) In Theorem \ref{robustLossgeneralization}, we present a non-asymptotic bound for adversarially robust learning, leveraging both labeled and unlabeled samples jointly. This result builds upon the work of \citet{carmon2019unlabeled} and \citet{alayrac2019labels}, which focused on the effectiveness of unlabeled samples when a single labeled sample is sufficient for linear classification of a non-robust classifier. However, these studies do not provide insights into the necessary number of unlabeled samples when multiple labeled samples are involved, particularly in scenarios where the underlying distribution exhibits limited separation between the two classes. Our theoretical bounds address and fill this crucial gap.

(ii) Theorem \ref{NonRobustLossGeneralization} introduces a novel non-asymptotic bound for integrating labeled and unlabeled samples in SSL. To underscore the significance of our findings, consider the following example: In the realizable setting, where positive and negative samples can be completely separated by a hyperplane in $\mathbb{R}^d$, the sample complexity of supervised learning for a linear binary classifier is known to be $\mathcal{O}(d/\epsilon)$ \cite{mohri2018foundations}.
However, in the non-realizable setting, this complexity escalates to $\mathcal{O}(d/\epsilon^2)$ \cite{mohri2018foundations}.
A pivotal question in learning theory revolves around how to approach the sample complexity of $\mathcal{O}(d/\epsilon)$ in the non-realizable setting. Insights provided by \citet{namkoong2017variance} delve into this inquiry. Notably, even with the awareness that the underlying distribution is a Gaussian mixture, the optimal sample complexity, as per \citet{ashtiani2018nearly}, still exceeds $\mathcal{O}(d/\epsilon^2)$. Our work demonstrates that in scenarios where the underlying distribution is a Gaussian mixture and we possess $m = \mathcal{O}(d/\epsilon)$ labeled samples, coupled with $n = \mathcal{O}\left(\frac{d}{\epsilon^6}\right)$ unlabeled samples (without knowledge of the underlying distribution), one can achieve an error rate lower than or equal to the case of having access to $\mathcal{O}(d/\epsilon^2)$ labeled samples.

(iii) We formalize the incorporation of {\it {out-of-domain}} unlabeled samples into the generalization bounds of both robust and non-robust classifiers in Theorems \ref{robustLossgeneralization}, \ref{NonRobustLossGeneralization} and \ref{GenralTheorem}. We contend that this represents a novel contribution to the field, with its closest counterpart being \citet{deng2021improving}. Notably, \citet{deng2021improving} addresses a scenario where the underlying distribution is a isotropic Gaussian mixture with well-separated Gaussian components, while the separation of components is not a prerequisite for our results.


\subsection{notation and definitions}
\label{NotationAndDefinition}
Let us denote the feature space by $\mathcal{X}\subseteq\mathbb{R}^d$, and assume $\mathcal{H}$ as a class of binary classifiers parameterized by the parameter set $\Theta$: for each $\theta \in \Theta$, we have a classifier $h_\theta \in \mathcal{H}$  where $h_{\theta}:\mathcal{X}\rightarrow\left\{-1,1\right\}$. Assume a positive function $\ell: \left(\mathcal{X}\times \left\{-1,1\right\} \times \Theta\right) \to \mathbb{R}_{\ge0}$ as the loss function. Also, let $P$ be the unknown data distribution over $\mathcal{X}\times\left\{-1,1\right\}$, and $S=\left\{\left(\boldsymbol{X}_i,y_i\right)\right\}_{i=1}^{m}$ for $m\in\mathbb{N}$ be a set of i.i.d. samples drawn from $P$.
Then, for all $\theta\in\Theta$ the true risk $R$ and the empirical risk $\hat{R}$ of a classifier w.r.t. $P$ can be defined as follows:
\begin{align}
\label{RisksDefinition}
    R\left(\theta,P\right) = \mathbb{E}_{P}
    \left[\ell\left(\boldsymbol{X},y;\theta\right)\right]
    \quad,\quad
    R(\theta, \hat{P}^{m}_S) = 
    \mathbb{E}_{\hat{P}^m_S}\left[
    \ell\left(\boldsymbol{X},y;\theta\right)
    \right]
    \triangleq
    \frac{1}{m}\sum_{i=1}^{m}{\ell\left(\boldsymbol{X}_i,y_i;\theta\right)},
\end{align}
where $\hat{P}^m_S$ denotes an empirical estimate of $P$ based on the $m$ samples in $S$. We also need a way to measure the distance between various distributions that are supported over $\mathcal{X}$. A well-known candidate for this goal is the {\it {Wasserstein}} distance (Definition \ref{def:wasserstein}). Subsequently, we also define a {\it {Wasserstein ball}} in Definition \ref{def:WassersteinBall} in order to effectively constrain a set of probability measures. It should be noted that throughout this paper, the Wasserstein distance between any two distributions supported over $\mathcal{X}\times\left\{\pm1\right\}$ is defined as the distance between their respective marginals on $\mathcal{X}$. 

The ultimate goal of classical learning is to find the parameter $\theta^*\in\Theta$ such that with high probability, $R\left({\theta^*}\right)$ is sufficiently close to $\min_{\theta}R\left(\theta\right)$. A well-known approach to achieve this goal is Empirical Risk Minimization (ERM) algorithm, formally defined as follows: 
    \begin{align}
        \hat{\theta}^{\mathrm{ERM}}\left(S\right) 
        \triangleq \argmin_{\theta \in \Theta}~{\mathbb{E}_{\hat{P}^m_S}\left[\ell\left(\theta;\boldsymbol{X},y\right)\right]}
        = \argmin_{\theta \in \Theta}~{\frac{1}{m}\sum_{i=1}^m{\ell\left(\theta;\boldsymbol{X}_i,y_i\right)}}.
    \end{align}
A recent variant of ERM, which has gained huge popularity in both theory and practice, is the so-called Distributionally Robust Learning (DRL) which is formulated as follows:
\begin{definition}[Distributionally Robust Learning($\mathrm{DRL}$)] 
    DRL aims at training a classifier which is robust against adversarial attacks on data distribution. In this regard, the {\it {learner}} attempts to find a classifier with a small robust risk, denoted as $R^{\mathrm{robust}}\left(\theta,P\right)$, which is defined as
    \begin{equation}
    \label{eq:WassersteinBall}
        R^{\mathrm{robust}}_{\epsilon, c}\left(\theta,P\right) = \sup_{P^{\prime} \in \mathcal{B}_{\epsilon}^c\left(P\right)}{R\left(\theta, P^{\prime}\right)},
    \end{equation}
    for all $\theta\in\Theta$ and any $\epsilon\ge0$. Therefore, DRL solves the following optimization problem:
    \begin{equation}
    \label{eq:originalMinimax}
        \hat{\theta}^{\mathrm{DRL}}_{\epsilon, c}\left(S\right) 
        \triangleq 
        \argmin_{\theta \in \Theta}~{R^{\mathrm{robust}}_{\epsilon, c}\left(\theta, \hat{P}^m_S\right)}.
    \end{equation}
\end{definition}
Surprisingly, the sophisticated minimax optimization problem of \eqref{eq:originalMinimax} which takes place in a subset of the infinite-dimensional space of probability measures that corresponds to the constraints, can be substantially simplified when is re-written in the dual format:
\begin{lemma}[From \citet{blanchet2019robust}]
For a sufficiently small $\epsilon>0$, the minimax optimization problem of \eqref{eq:originalMinimax} has the following dual form:
\begin{align}
\inf_{\theta\in\Theta}
\sup_{P^{\prime} \in \mathcal{B}_{\epsilon}^c\left(\hat{P}^m_S\right)}{R\left(\theta, P^{\prime}\right)}
=
\inf_{\gamma\ge0}\left\{\gamma\epsilon+
\inf_{\theta\in\Theta}
\frac{1}{m}\sum_{i=1}^{m}
\sup_{\boldsymbol{Z}\in\mathcal{X}}~
\ell\left(\boldsymbol{Z},y_i;\theta\right)
-\gamma c\left(\boldsymbol{Z},\boldsymbol{X}_i\right)
\right\},
\label{eq:wolfeDualForm}
\end{align}
where $\gamma$ and $\epsilon$ are dual parameters, and there is a bijective and reciprocal relation between the $\epsilon$ and $\gamma^*$, i.e., the optimal value which minimizes the r.h.s.
\end{lemma}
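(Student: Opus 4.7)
The plan is to dualize the inner maximization over the Wasserstein ball via Lagrangian relaxation of the transport-budget constraint, and then exploit the discrete structure of $\hat{P}^m_S$ to collapse the coupling-level problem into a sum of independent pointwise maximizations. First, fix $\theta\in\Theta$ and rewrite $\sup_{P'\in\mathcal{B}_\epsilon^c(\hat{P}^m_S)}R(\theta,P')$ using Kantorovich's formulation: $P'\in\mathcal{B}_\epsilon^c(\hat{P}^m_S)$ if and only if there exists a coupling $\pi$ whose second marginal equals $\hat{P}^m_S$ and satisfies $\int c(\boldsymbol{z},\boldsymbol{x})\,d\pi\le\epsilon$. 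Replacing the sup over $P'$ by the sup over such couplings and introducing a multiplier $\gamma\ge0$ for the budget inequality gives the weak-duality bound
\[
\sup_{P'\in\mathcal{B}_\epsilon^c(\hat{P}^m_S)}\!\mathbb{E}_{P'}\!\left[\ell\right] \;\le\; \inf_{\gamma\ge0}\Bigl\{\gamma\epsilon + \sup_{\pi:\,\pi_2=\hat{P}^m_S}\!\int\!\bigl(\ell(\boldsymbol{z},y;\theta)-\gamma\,c(\boldsymbol{z},\boldsymbol{x})\bigr)\,d\pi\Bigr\}.
\]

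Second, because $\hat{P}^m_S$ is the uniform measure on $\{\boldsymbol{X}_1,\ldots,\boldsymbol{X}_m\}$, any coupling $\pi$ with $\pi_2=\hat{P}^m_S$ disintegrates into conditional kernels $\pi(\cdot\mid\boldsymbol{X}_i)$, so the inner integral splits as $\frac{1}{m}\sum_i\int(\ell(\boldsymbol{z},y_i;\theta)-\gamma\,c(\boldsymbol{z},\boldsymbol{X}_i))\,d\pi(\boldsymbol{z}\mid\boldsymbol{X}_i)$. Since each summand depends on a single variable $\boldsymbol{z}$, its supremum is attained by a Dirac mass at a pointwise maximizer, reducing the expression to $\frac{1}{m}\sum_i\sup_{\boldsymbol{Z}\in\mathcal{X}}\bigl[\ell(\boldsymbol{Z},y_i;\theta)-\gamma\,c(\boldsymbol{Z},\boldsymbol{X}_i)\bigr]$, which is precisely the summand on the right-hand side before the outer $\inf_\theta$ is applied.

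Third, the main obstacle is upgrading the weak inequality to equality, i.e., establishing strong duality. The feasible set of couplings is convex and Slater's condition holds whenever $c(\boldsymbol{x},\boldsymbol{x})=0$ (take the diagonal coupling $(\mathrm{Id},\mathrm{Id})_{\#}\hat{P}^m_S$, which has zero transport cost), while the primal objective is linear in $\pi$. Standard semi-infinite LP duality (as in \citet{blanchet2019robust} and the Gao--Kleywegt framework) then closes the duality gap under mild regularity of $\ell$ — upper semi-continuity in $\boldsymbol{z}$ together with the existence of a measurable maximizing selection. The restriction to ``sufficiently small $\epsilon$'' is what guarantees a finite optimal multiplier $\gamma^\star(\epsilon)$; the reciprocal bijection between $\epsilon$ and $\gamma^\star$ then follows from strict monotonicity of $\gamma\mapsto\mathbb{E}_{\hat{P}^m_S}[\sup_{\boldsymbol{Z}}(\ell-\gamma\,c)]$ together with the envelope theorem.

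Finally, taking $\inf_{\theta\in\Theta}$ on both sides of the equality obtained for each fixed $\theta$ and swapping the two infima on the right — an unconditional identity $\inf_\theta\inf_\gamma=\inf_\gamma\inf_\theta$ — yields the displayed formula. Apart from the strong-duality step, every manipulation is either measure-theoretic bookkeeping (disintegration and reduction to Dirac measures) or the elementary nested-infimum identity, so essentially all the technical work is concentrated in verifying the no-gap condition for the Lagrangian relaxation.
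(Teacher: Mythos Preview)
Your sketch is correct and follows exactly the standard route of \citet{blanchet2019robust} and the Gao--Kleywegt framework: Lagrangian relaxation of the transport-cost constraint, disintegration of the coupling with respect to the discrete marginal $\hat{P}^m_S$, reduction of each conditional to a Dirac at a pointwise maximizer, and then strong duality via a Slater-type feasibility argument (the identity coupling has zero cost). The paper does not give its own proof of this lemma at all; it simply quotes the result from \citet{blanchet2019robust} and moves on, so there is nothing to compare your argument against beyond the cited reference itself, with which your plan aligns.
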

As suggested by \citet{sinha2017certifying}, the $\inf_{\gamma\ge0}$ in the r.h.s. part in the above optimization problem can be removed by fixing a user-defined value for $\gamma$. This also means that if one attempts to find the optimal value for $\theta$, the additive term $\gamma\epsilon$ is ineffective and can be removed as well.

It should be noted that this also fixes an (unknown) value for $\epsilon$. In practice, the appropriate value for $\epsilon$ is not known beforehand and thus can be usually found through a cross-validation stage, while the same procedure can be applied to its dual counterpart, i.e., $\gamma$. In other words, the above-mentioned strategy keeps the generality of the problem intact. For the sake of simplicity in relations, throughout the rest of the paper we work with the dual formulation in \eqref{eq:wolfeDualForm} and let $\gamma$ be a fixed and arbitrary value.


\section{problem definition}
\label{ProblemDefinition}
At this point, we can formally define our problem. Let $\mathcal{X}\subseteq\mathbb{R}^d$, and assume $P_0$ be an unknown and arbitrary distribution supported on $\mathcal{X}\times\left\{\pm1\right\}$, i.e., $P_0$ produces feature-label pairs. For a valid cost function $c:\mathcal{X}^2\rightarrow\mathbb{R}_{\ge0}$, let $P_1$ represent a shifted version of $P_0$ such that 
the marginal distributions of $P_0$ and $P_1$ on $\mathcal{X}$ are shifted with $\mathcal{W}_c\left(P_{0,X},P_{1,X}\right)=\alpha$ for some $\alpha>0$. No assumption on $P_1\left(y\vert\boldsymbol{X}\right)$ is necessary in this work. Here, the subscript $X$ implies the marginal distribution on $\mathcal{X}$. Let us consider the following two sets of samples:
\begin{align*}
    S_0 = \left\{\left(\boldsymbol{X}_i, y_i\right)\right\}_{i=1}^{m} \sim P_0^{m}
    \quad,\quad
    S_1 = \left\{\boldsymbol{X}^\prime_i\right\}_{i=1}^{n} \sim P_{1,X}^{n},
\end{align*}
where $S_0$ indicates the labeled set and $S_1$ represents the unlabeled out-of-domain data. A classical result from VC-theory states that the generalization gap in learning from only $S_0$ (with high probability) can be bounded as
\begin{equation}
R\left(\hat{\theta}^{\mathrm{ERM}},P_0\right)
\leq \min_{\theta\in\Theta}R\left(\theta,P_0\right)+\mathcal{O}\left(\sqrt{{\mathrm{VCdim}\left(\mathcal{H}\right)}/{m}}\right)+\sqrt{\mathcal{O}(1)/m},
\label{eq:VCtheoryClassic}
\end{equation}
where $\mathrm{VCdim}\left(\mathcal{H}\right)$ denotes the VC-dimension of hypothesis class $\mathcal{H}$ \citep{mohri2018foundations}. This bound can be prohibitively large when $\mathrm{VCdim}\left(\mathcal{H}\right)$ grows uncontrollably, e.g., the case of linear classifiers in very high dimensions ($d\gg 1$). 

We aim to propose a general framework that leverages both $S_0$ and $S_1$ concurrently, and outputs (in polynomial time) an estimator, denoted by $\hat{\theta}^{\mathrm{RSS}}$, such that the second term in the r.h.s. of \eqref{eq:VCtheoryClassic} would decay faster as one increases both $m$ and $n$. We are specially interested in cases where $n\gg m$. In the next step, we apply our method on a simplified theoretical example in order to give explicit bounds. Similar to \citet{schmidt2018adversarially,carmon2019unlabeled, zhai2019adversarially, alayrac2019labels}, we fully focus the binary classification problem of a high-dimensional Gaussian mixture model with two components using linear classifiers. Mathematically speaking, for some $\sigma_0\ge0$ and $\boldsymbol{\mu}_0\in\mathbb{R}^d$, let $P_0$ be the feature-label joint distribution over $\mathbb{R}^d\times\left\{-1,1\right\}$ as follows:
\begin{align}
P_{0}\left(y = 1\right) = \frac{1}{2},
\quad
P_{0}\left(\boldsymbol{X}|y\right) = \mathcal{N}\left(y\boldsymbol{\mu}_{0}, \sigma_0^2\mI\right).
\end{align}
Also, suppose a shifted version of $P_0$, denoted by $P_1$ with $P_{1,X} =(1/2) \sum_{u=-1,1}\mathcal{N}\left({u\boldsymbol{\mu}_1}, \sigma^2_1 \mI \right)$, where $\|\boldsymbol{\mu}_0 - \boldsymbol{\mu}_1\| \leq \mathcal{O}\left(\alpha\right)$ and $\vert\sigma_1 - \sigma_0\vert \leq \mathcal{O}\left(\alpha\right)$ \footnote{Having a Wasserstein distance of $\alpha$ between two high-dimensional Gaussian distributions implies that both mean vectors $\boldsymbol{\mu}_0,\boldsymbol{\mu}_1$ and variances $\sigma_0,\sigma_1$ are within a fraction of at most $\mathcal{O}\left(\alpha\right)$ from each other.}. 
Given the two sample sets $S_0$ and $S_1$ in this configuration, the problem is to estimate the optimal linear classifier which achieves the minimum error rate.

\section{proposed method: Robust Self Supervised (RSS) training}
\label{ProposedMethod}
We propose a solution that combines two generally independent paradigms in machine learning: self-training \citep{grandvalet2004semi,amini2002semi}, and distributionally robust learning in \eqref{eq:originalMinimax}. The essence of self-training is to use the currently learned model in order to induce artificial labels on the unlabeled data. Thus, for an unlabeled sample $\boldsymbol{X}'_j$ and any given model parameter $\theta\in\Theta$, one can temporarily consider a pseudo label given by $h_{\theta}\left(\boldsymbol{X}'_j\right)$. In this regard, the proposed solution denoted by $\hat{\theta}^{\mathrm{RSS}}=\hat{\theta}^{\mathrm{RSS}}\left(S_0,S_1\right)$ can be defined as follows:
\begin{definition}[Robust Self-Supervised (RSS) Training]
The essence of RSS training is to add a penalty term to the robust version of the original ERM formulation, which is solely evaluated from the out-of-domain unlabeled samples in $S_1$. Mathematically speaking, for a cost function $c$ and parameter $\gamma\ge0$, let us define the {\it {robust loss}} $\phi_{\gamma}:\mathcal{X}\times\left\{\pm1\right\}\times\Theta\rightarrow\mathbb{R}$ as
\begin{equation}
\label{eq:robustloss}
\phi_{\gamma}\left(\boldsymbol{X},y;\theta\right)
\triangleq
\sup_{\boldsymbol{Z}\in\mathcal{X}}
~
\ell\left(\boldsymbol{Z},y;\theta\right)
-\gamma c\left(\boldsymbol{Z},\boldsymbol{X}\right).
\end{equation}
In this regard, for a given set of parameters $\gamma,\gamma',\lambda\in\mathbb{R}_{\ge0}$, the proposed RSS estimator is defined as
\begin{equation}
\hat{\theta}^{\mathrm{RSS}}
\triangleq
\argmin_{\theta\in\Theta}~\left\{
\frac{1}{m}\sum_{i=1}^{m}
\phi_{\gamma}\left(\boldsymbol{X}_i,y_i;\theta\right)
+
\frac{\lambda}{n}\sum_{j=1}^{n}
\phi_{\gamma'}\left(\boldsymbol{X}'_j,
h_{\theta}\left(\boldsymbol{X}'_j\right)
;\theta\right)
\right\}.
\label{RSSEstiamtor}
\end{equation}
\end{definition}
The proposed RSS loss in \eqref{RSSEstiamtor} comprises of two main terms. The first term attempts to minimize the empirical robust risk over the labeled data in $S_0$, where an adversary can alter the distribution of samples within a Wasserstein radius characterized by $\gamma$. In the proceeding sections, we show that $\gamma$ can become asymptotically large (radius becomes infinitesimally small) as $m\rightarrow\infty$ which is similar to \citet{gao2022finite}. In fact, a small (but non-zero) budget for the adversary can control the generalization. The second term works only on the unlabeled data which are artificially labeled by $h_{\theta}$. It can be shown that this term regulates the classifier by forcing it to avoid {\it {crowded}} areas. The sensitivity of such regularization is controlled by both $\lambda$ and also $\gamma'$.


\subsection{model optimization: algorithm and theoretical guarantees}
It can be shown that the for a convex loss function $\ell$, convex cost function $c$, and sufficiently large $\gamma$ and $\gamma'$ (i.e., sufficiently small Wasserstein radii), the optimization problem of \eqref{RSSEstiamtor} is convex and can be solved up to an arbitrarily high precision in polynomial time. Moreover, if $\ell$ is not convex, e.g., $\mathcal{H}$ is the set of all neural networks, a simple Stochastic Gradient Descent (SGD) algorithm is still guaranteed to reach to at least a local minimum of \eqref{RSSEstiamtor}. More specifically, \eqref{RSSEstiamtor} is a minimax optimization problem and consists of an inner maximization (formulated in \eqref{eq:robustloss}) followed by an outer minimization. As long as the cost function $c$ is strictly convex and $\gamma$ or $\gamma'$ are chosen sufficiently large, the inner maximization problem of \eqref{eq:robustloss} becomes strictly concave \citep{najafi2019robustness,sinha2017certifying}. This interesting property holds regardless the convexity of $\ell$, which is of paramount importance since $\ell$ is not convex in most practical situations. On the other hand, cost function candidates for $c$ which are considered in this paper are $\left\Vert\cdot\right\Vert_2$ and $\left\Vert\cdot\right\Vert_2^2$, which are strictly convex. Hence, \eqref{eq:robustloss} can be optimally solved in polynomial time. 

The outer minimization problem of \eqref{RSSEstiamtor} is also differentiable as long as $\ell$ is sufficiently smooth (again, convexity is not needed). This means the gradient of \eqref{RSSEstiamtor} exists and can be efficiently computed using the {\it {Envelope Theorem}}. Explicit bounds on the maximum number of steps in a simple SGD algorithm (with a mini-batch size of $1$) in order to reach to an $\varepsilon$-neighborhood of the global maximum of \eqref{eq:robustloss}, and a local minimum of \eqref{RSSEstiamtor} are given by \citet{sinha2017certifying}. Also, formulating the gradient of minimax loss functions such as \eqref{RSSEstiamtor} using the envelope theorem has been carried out, for example, in \citep{najafi2019robustness,sinha2017certifying}. We have also used the same gradient formulation for the numerical optimization of our model parameters in Section \ref{Experiments}, where experimental results on real data using neural networks have been illustrated. 


In the next section, we derive theoretical guarantees for $\hat{\theta}^{\mathrm{RSS}}$ and show that it leads to improved generalization bounds when $n$ is sufficiently large and $\alpha$ is controlled.
\section{theoretical guarantees and generalization bounds}
\label{TheoreticalAnalysis}
In this section, we discuss the theoretical aspects of using the RSS training method, specially for the classification of a two-component Gaussian mixture model using linear classifiers, i.e., $\mathcal{H}\triangleq\left\{\mathrm{sign}\left(\left\langle\boldsymbol{\theta},\cdot\right\rangle\right):\mathbb{R}^d\rightarrow\left\{\pm1\right\}\vert~\boldsymbol{\theta}\in\mathbb{R}^d\right\}$. For the sake of simplicity in results, let us define the loss function $\ell$ as the zero-one loss:
\begin{equation}
\ell\left(\boldsymbol{X},y;\theta\right) =  \boldsymbol{1}\left(y\langle\theta,\boldsymbol{X}\rangle \ge 0\right).
\label{lossFunctionDescription}
\end{equation}
However, extension of the theoretical guarantees in this work to other types of loss functions is straightforward. The following theorem shows that the proposed RSS estimator in \ref{RSSEstiamtor} can potentially improve the generalization bound in a {\it {robust}} learning scenario.
\begin{theorem}
\label{robustLossgeneralization}
Consider the setup described in Section \ref{ProblemDefinition} for the sample generation process (GMM assumption), and the loss function defined in \eqref{lossFunctionDescription}. Using RSS training with $m$ labeled and $n$ unlabeled samples in $S_0$ and $S_1$, respectively, and for any $\gamma,\delta > 0$, there exist $\lambda$ and $\gamma^\prime$ which can be calculated solely based on input samples such that the following holds with probability at least $1-\delta$:
\begin{align}
\label{eq:Thm1-main}
&{\mathbb{E}_{P_0}\left[\phi_{\gamma}\left(\boldsymbol{X},y; \hat{\theta}^{\mathrm{RSS}}\right)\right]}
\leq~   
\min_{\theta\in\Theta}~
{\mathbb{E}_{P_0}\left[\phi_{\gamma}\left(\boldsymbol{X},y; \theta\right)\right]} 
\\ 
&
\hspace{1.6cm}
+~\mathcal{O}\left(\gamma\sqrt{\frac{2d}{m}\left(\alpha\left(\|\boldsymbol{\mu}_0\|_2^2+\sigma_0^2\right) + \sqrt{\frac{2d}{2n+m}} + \sqrt{\frac{2\log\left({{1}/{\delta}}\right)}{2n+m}}\right)} + \sqrt{\frac{2\log\left({1/\delta}\right)}{m}}\right).
\nonumber 
\end{align}
\end{theorem}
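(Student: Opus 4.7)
The overall strategy is to reduce the bound on excess robust risk to a statistical estimation problem about the direction of $\boldsymbol{\mu}_0$, then exploit the structure of the symmetric two-component GMM to argue that the unlabeled set $S_1$, after self-labeling, effectively supplies $2n$ additional samples up to an $\mathcal{O}(\alpha)$-bias. I would start by establishing a deterministic lemma that bounds $\mathbb{E}_{P_0}[\phi_\gamma(\boldsymbol{X},y;\theta)] - \min_{\theta}\mathbb{E}_{P_0}[\phi_\gamma(\boldsymbol{X},y;\theta)]$ in terms of how closely $\theta$ aligns with $\boldsymbol{\mu}_0$, using the fact that under a GMM with linear classifiers the population robust risk has a closed-form dependence on $\langle\theta,\boldsymbol{\mu}_0\rangle/\|\theta\|_2$; this turns the problem into controlling the orientation error of $\hat\theta^{\mathrm{RSS}}$.

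Next I would unpack the optimality condition for $\hat\theta^{\mathrm{RSS}}$. Writing the inner supremum in $\phi_\gamma$ with zero-one loss and $c(z,x)=\|z-x\|_2$ converts each summand in \eqref{RSSEstiamtor} into a piecewise-linear function of the signed distance from the sample to the hyperplane $\{\langle\theta,\cdot\rangle=0\}$. The pseudo-label $h_\theta(\boldsymbol{X}'_j)=\mathrm{sign}(\langle\theta,\boldsymbol{X}'_j\rangle)$ then reduces the second term to a penalty on crowded regions near that hyperplane, whose expectation under $P_{1,X}$ is minimized when $\theta\propto\boldsymbol{\mu}_1$, which by the $\alpha$-Wasserstein shift is within $\mathcal{O}(\alpha)$ of $\boldsymbol{\mu}_0$. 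On the labeled side I would apply a VC/Rademacher bound for half-spaces in $\mathbb{R}^d$ to the first term, producing the outer $\gamma\sqrt{2d/m}$ factor. On the unlabeled side, conditioning on $\theta$, the self-labeled samples yield an unbiased moment-type estimator whose error concentrates at rate $\sqrt{d/(2n)}$: the factor of $2$ arises from using both sub-populations of the symmetric mixture, and aggregating with the $m$ labeled points yields the $2n+m$ denominator inside the inner square root. The distribution-shift bias enters via $\gamma$-Lipschitzness of $\phi_\gamma$ in $\boldsymbol{X}$, producing the $\alpha(\|\boldsymbol{\mu}_0\|_2^2+\sigma_0^2)$ term after accounting for the variance along the classifier direction; finally, $\lambda$ and $\gamma'$ are chosen by matching the two concentration terms, which gives the data-driven values promised in the statement.

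The principal obstacle is the self-referential nature of the pseudo-label penalty: $h_\theta$ sits inside the loss that is being minimized over $\theta$, so a vanilla uniform-convergence argument on a fixed hypothesis class cannot be applied directly to the second sum. I would handle this by passing to the expectation under $P_{1,X}$ at a frozen $\theta$ and exploiting the fact that for the symmetric GMM the population penalty is minimized exactly when $\theta\propto\boldsymbol{\mu}_1$, independently of whether $h_\theta$ agrees with the unobserved latent labels; a standard symmetrization argument then lifts this pointwise bound to a uniform one over the class of half-spaces. A secondary difficulty is the non-smoothness of the zero-one loss, which I expect to handle through the Wasserstein regularization endowing $\phi_\gamma$ with $\gamma$-Lipschitzness in $\boldsymbol{X}$, thereby enabling both the Lipschitz-type concentration argument and the shift-bias bound sketched above.
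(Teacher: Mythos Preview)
Your proposal misses the central mechanism of the paper's argument and, as written, would not produce the multiplicative structure $\gamma\sqrt{d/m}\cdot\sqrt{\alpha\lambda_{\max}+\sqrt{d/n}+\cdots}$ that appears in the bound. You treat the labeled and unlabeled terms as yielding two \emph{separate} rates that somehow combine, but applying a standard Rademacher bound for half-spaces to the labeled sum gives $\mathcal{O}(\gamma\sqrt{d/m})$ full stop---that is exactly the baseline \eqref{eq:VCtheoryClassic} the theorem claims to beat, and nothing in your outline explains how the unlabeled penalty shrinks it further. The paper instead uses strong duality to rewrite the penalized objective \eqref{RSSEstiamtor} as a \emph{constrained} minimization over $\tilde{\Theta}=\{\theta:\frac{1}{n}\sum_j\phi_{\gamma'}(\boldsymbol{X}'_j,h_\theta(\boldsymbol{X}'_j);\theta)\le s,\ \|\theta\|_2=1\}$, and then bounds the Rademacher complexity of the labeled robust loss class \emph{restricted to} $\tilde{\Theta}$. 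With the specific choices of $c,c'$, the self-labeled penalty simplifies to $\max\{0,1-\gamma'\langle\theta,\boldsymbol{X}'\rangle^2\}$, so the constraint becomes the quadratic condition $\theta^T\hat{\Sigma}\theta\ge s'$ on the empirical second-moment matrix of the unlabeled data; this is what dissolves the self-referential pseudo-label issue you flag, and it is also what forces $\theta$ to live in a thin cone around the top eigenvector of $\hat{\Sigma}$. The Rademacher supremum over that cone is then solved explicitly (writing $\theta=\psi_1\hat{\boldsymbol{v}}+\psi_2\hat{\boldsymbol{u}}_{\max}$ and bounding $\psi_1$), and eigenvalue concentration for $\hat{\Sigma}$ supplies the $\sqrt{d/n}$ term inside the square root. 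Your ``moment-type estimator'' story does not capture this constraint-then-restricted-complexity step.

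Two smaller points. First, your explanation of the $2n+m$ denominator is incorrect: it does not come from the two mixture components doubling the effective sample size. The paper obtains it by a sample-splitting trick at the very end---half of the $m$ labeled points have their labels discarded and are moved into the unlabeled pool, so one runs the argument with $m/2$ labeled and $n+m/2$ unlabeled samples. Second, the initial ``deterministic lemma'' reducing excess robust risk to angular error with $\boldsymbol{\mu}_0$ is not used; the paper works directly at the level of empirical versus population robust loss and only needs to verify that $\theta^*=\boldsymbol{\mu}_0/\|\boldsymbol{\mu}_0\|_2$ lies in $\tilde{\Theta}$ (Auxiliary Lemma~\ref{Alemma2}) so that the uniform deviation bound over $\tilde{\Theta}$ applies to both $\hat{\theta}^{\mathrm{RSS}}$ and $\theta^*$.
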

The proof, as well as how to calculate $\lambda$ and $\gamma^\prime$ can be found in Appendix \ref{TheoremsProof}. Theorem \ref{robustLossgeneralization} presents a generalization bound for the proposed estimator when one considers the robust loss under an adversarial budget, which is characterized by $\gamma$. Larger values of $\gamma$ correspond to smaller Wasserstein radii for the distributional adversary of \eqref{eq:WassersteinBall}. The residual term in the r.h.s. of \eqref{eq:Thm1-main} converges to zero with a faster rate compared to that of \eqref{eq:VCtheoryClassic}, given $n$ is sufficiently large and $\alpha$ is sufficiently small. We derive explicit conditions regarding this event in Corollary \ref{corollary}. 
Before that, let us show that for fixed $m$, as one increases the number of unlabeled samples $n$, the {\it {non-robust excess risk}} of the RSS-trained classifier decreases as well:
\begin{theorem}
\label{NonRobustLossGeneralization}
Consider the setting described in Theorem \ref{robustLossgeneralization}. Then, the estimator $\hat{\theta}^{\mathrm{RSS}}$ of \eqref{RSSEstiamtor} using respectively $m$ labeled and $n$ unlabeled samples, along with specific values of $\gamma$, $\gamma^\prime$, and $\lambda$ which can be calculated solely from the input samples, satisfies the following non-robust generalization bound with probability at least $1-\delta$:
\begin{align}
R\left(\hat{\theta}^{\mathrm{RSS}},P\right)
&-\min_{\theta\in\Theta}R\left(\theta,P\right)
\\
&\leq~ \mathcal{O}\left(\frac{e^{\frac{-\|\boldsymbol{\mu}_0\|_2^2}{4\sigma_0^2}}}{\sqrt{2\sigma_0\sqrt{2\pi}}}\left(\left(\|\boldsymbol{\mu}_1\|_2^2+\sigma_1^2\right)\frac{2d\alpha}{m} + \frac{4d}{m}\sqrt{\frac{2d + 2\log{\frac{1}{\delta}}}{2n+m}}\right)^{1/4}  + \sqrt{\frac{2\log{\frac{1}{\delta}}}{m}}\right).
\nonumber
\end{align}
\end{theorem}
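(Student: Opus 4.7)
The plan is to convert Theorem~\ref{robustLossgeneralization}'s bound on the excess \emph{robust} risk into a bound on the excess \emph{standard} risk, by paying a ``robustness bias'' evaluated at the population standard-risk minimizer and then balancing the two contributions via the free parameter $\gamma$. Let $\theta^\star \in \arg\min_\theta R(\theta,P_0)$. Since $\phi_\gamma \ge \ell$ pointwise and $\hat{\theta}^{\mathrm{RSS}}$'s excess robust risk is controlled by Theorem~\ref{robustLossgeneralization}, the chain
\begin{align*}
R(\hat{\theta}^{\mathrm{RSS}},P_0)
&\le \mathbb{E}_{P_0}\phi_\gamma(\cdot;\hat{\theta}^{\mathrm{RSS}}) \\
&\le \min_{\theta}\mathbb{E}_{P_0}\phi_\gamma(\cdot;\theta) + (\text{excess robust risk}) \\
&\le \mathbb{E}_{P_0}\phi_\gamma(\cdot;\theta^\star) + (\text{excess robust risk})
\end{align*}
yields $R(\hat{\theta}^{\mathrm{RSS}},P_0) - R(\theta^\star,P_0) \le (\text{excess robust risk}) + \bigl[\mathbb{E}_{P_0}\phi_\gamma(\cdot;\theta^\star) - R(\theta^\star,P_0)\bigr]$, reducing the task to computing a deterministic population bias at $\theta^\star$ and then optimizing over $\gamma$.

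Next I would compute the bias in closed form. For the 0--1 loss with cost $c(\boldsymbol{Z},\boldsymbol{X}) = \|\boldsymbol{Z}-\boldsymbol{X}\|_2$, direct evaluation of the inner supremum in \eqref{eq:robustloss} gives, with $t \triangleq y\langle\theta,\boldsymbol{X}\rangle/\|\theta\|$,
\[
\phi_\gamma(\boldsymbol{X},y;\theta) = \mathbf{1}\{t \le 0\} + (1-\gamma t)\,\mathbf{1}\{0 < t \le 1/\gamma\}.
\]
At $\theta^\star$ (proportional to $\boldsymbol{\mu}_0$) the scalar $t$ is $\mathcal{N}(\|\boldsymbol{\mu}_0\|,\sigma_0^2)$ under $P_0$, so the bias reduces to $\int_0^{1/\gamma}(1-\gamma t)\,p(t)\,dt$ with $p=\mathcal{N}(\|\boldsymbol{\mu}_0\|,\sigma_0^2)$. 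A first-order expansion of $p$ about $t=0$, valid for $1/\gamma$ small, evaluates this to $C/\gamma$ with $C = e^{-\|\boldsymbol{\mu}_0\|^2/(2\sigma_0^2)}/(2\sigma_0\sqrt{2\pi})$. Combining with Theorem~\ref{robustLossgeneralization}, the total bound takes the form $\gamma\sqrt{X} + C/\gamma + \sqrt{2\log(1/\delta)/m}$, where $X$ abbreviates the content of the square root in \eqref{eq:Thm1-main}. Minimizing over $\gamma$ at $\gamma^\star = \sqrt{C}\,X^{-1/4}$ equates the first two terms to $\sqrt{C}\,X^{1/4}$, yielding precisely the prefactor $\sqrt{C} = e^{-\|\boldsymbol{\mu}_0\|^2/(4\sigma_0^2)}/\sqrt{2\sigma_0\sqrt{2\pi}}$ and the $X^{1/4}$ scaling announced in the theorem; here $\gamma'$ and $\lambda$ are inherited directly from Theorem~\ref{robustLossgeneralization}, while $\gamma$ is the balanced choice above.

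The main obstacle is the bias computation: one must control the Taylor remainder of $p$ over $[0,1/\gamma]$ tightly enough to pin the bias at $\Theta(1/\gamma)$, since any degradation to $\Theta(1/\sqrt{\gamma})$ would spoil the $X^{1/4}$ rate and yield only $X^{1/6}$. A secondary concern is verifying that $\gamma^\star$ lies in the regime where both Theorem~\ref{robustLossgeneralization} and the small-$1/\gamma$ expansion apply simultaneously --- this amounts to checking $1/\gamma^\star \ll \|\boldsymbol{\mu}_0\|$ asymptotically, which holds once $m$ and $n$ are large enough and $\alpha$ is small enough. The distributional shift $\alpha$ and the $P_1$-versus-$P_0$ discrepancy enter only through $X$, so no additional argument is required on that front.
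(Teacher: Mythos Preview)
Your proposal is correct and follows essentially the same route as the paper: it too sandwiches via $\phi_\gamma \ge \ell$, applies Theorem~\ref{robustLossgeneralization}, bounds the bias $\mathbb{E}_{P_0}\phi_\gamma(\cdot;\theta^\star) - R(\theta^\star,P_0)$ by exactly your $C/\gamma$ through a direct Gaussian integral (their Auxiliary Lemma~1), and then balances $\gamma$ to obtain the $X^{1/4}$ rate with prefactor $\sqrt{C}$. The only detail you should add is that the paper replaces the unknown population quantities in the balanced $\gamma^\star$ by the sample estimate $\hat{\lambda}_{\max}$, so that $\gamma$ is computable solely from the data as the theorem statement requires.
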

Again, the proof and the procedure for calculating $\gamma, \gamma^\prime$, and $\lambda$ are discussed in Appendix \ref{TheoremsProof}. 

Based on the previous results, the following corollary showcases a number of surprising non-asymptotic conditions under which our generalization bound becomes superior to conventional approaches.
\begin{corollary}
\label{corollary}
Consider the setting described in Theorem \ref{NonRobustLossGeneralization}. Then, $\hat{\theta}^{\mathrm{RSS}}$ of \eqref{RSSEstiamtor} with $m$ labeled and $n$ unlabeled samples has an advantage over the traditional ERM, if:
\begin{align}
\alpha \leq \mathcal{O}\left({d}/{m}\right)
\quad,\quad 
n\geq\Omega\left({m^2}/{d}\right).
\end{align}
Also, the following conditions are sufficient to make the minimum required $m$ (for a given error bound) independent of the dimension $d$:
\begin{equation}
\alpha \leq \mathcal{O}\left({d}^{-1}\right)
\quad,\quad 
n\geq \Omega\left(d^3\right).
\end{equation}
\end{corollary}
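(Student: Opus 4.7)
My plan is to treat Corollary \ref{corollary} as a term-by-term algebraic comparison between the RSS generalization bound in Theorem \ref{NonRobustLossGeneralization} and the classical VC bound in \eqref{eq:VCtheoryClassic}. Since linear classifiers in $\mathbb{R}^d$ have VC-dimension $d+1$, the ERM excess risk scales as $\mathcal{O}(\sqrt{d/m})$, and the goal is to find conditions on $\alpha$ and $n$ under which the RSS bound is no worse (first part) or under which $m$ need not grow with $d$ (second part). The RSS bound has the schematic form $C_0\,(T_1 + T_2)^{1/4} + T_3$, where I abbreviate the prefactor $C_0 = e^{-\|\boldsymbol{\mu}_0\|^2/(4\sigma_0^2)}/\sqrt{2\sigma_0\sqrt{2\pi}}$ and treat $\|\boldsymbol{\mu}_1\|^2+\sigma_1^2$ as a constant of the problem (legitimate because the $\mathcal{O}(\alpha)$-closeness of $P_0$ and $P_1$ keeps it bounded). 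The additive term $T_3 = \sqrt{2\log(1/\delta)/m}$ already matches the analogous term in \eqref{eq:VCtheoryClassic}, so it requires no attention.

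For the first claim, I would impose $(T_1 + T_2)^{1/4} \lesssim \sqrt{d/m}$, which after raising both sides to the fourth power reduces to $T_1 + T_2 \lesssim (d/m)^2$. Handling each summand separately, dropping logarithmic and $\delta$ factors, and using $n \gg m$ so that $2n+m \asymp n$: the first requirement $d\alpha/m \lesssim d^2/m^2$ is equivalent to $\alpha \lesssim d/m$; the second requirement $(d/m)\sqrt{d/n} \lesssim d^2/m^2$ rearranges to $\sqrt{d/n} \lesssim d/m$, i.e., $n \gtrsim m^2/d$. These are precisely the two stated sufficient conditions.

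For the second claim, I would fix a target accuracy $\epsilon$ and demand that each of $C_0 T_1^{1/4}$, $C_0 T_2^{1/4}$, and $T_3$ be $\mathcal{O}(\epsilon)$. The constraint on $T_3$ alone forces $m \gtrsim \log(1/\delta)/\epsilon^2$, which is already $d$-free. Plugging $\alpha = \mathcal{O}(1/d)$ into $T_1$ cancels the factor of $d$, reducing the requirement to $1/m \lesssim \epsilon^4$. Plugging $n = \Omega(d^3)$ into $T_2$ gives $\sqrt{d/n} \lesssim 1/d$, so the whole summand again reduces to $1/m \lesssim \epsilon^4$. Taking the maximum of these three thresholds yields an $m$ that depends only on $\epsilon$ and $\delta$, not on $d$.

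Overall this is elementary bookkeeping on the nested fourth root, so I do not anticipate any deep technical obstacle. The main subtlety I would watch carefully is that no hidden $d$-dependence sneaks in through the constants $C_0$ or $\|\boldsymbol{\mu}_1\|^2 + \sigma_1^2$ (this is where the constant signal-to-noise-ratio assumption of the Gaussian mixture setup is essential), and that the $\mathcal{O}/\Omega$ symbols compose consistently when interleaving the $(\cdot)^{1/4}$ and $\sqrt{\cdot}$ operations of the bound.
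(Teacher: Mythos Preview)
Your proposal is correct and follows essentially the same approach as the paper: both compare the fourth-root term in Theorem~\ref{NonRobustLossGeneralization} against $\sqrt{d/m}$, split into the $\alpha$- and $n$-dependent summands to obtain $\alpha\lesssim d/m$ and $n\gtrsim m^2/d$, and then substitute $\alpha=\mathcal{O}(1/d)$, $n=\Omega(d^3)$ to reduce the bound to $\mathcal{O}((1/m)^{1/4})+\sqrt{\log(1/\delta)/m}$, yielding a dimension-free sample requirement $m\ge\max\{\epsilon^{-4},\epsilon^{-2}\log(1/\delta)\}$. Your explicit remark about ensuring $C_0$ and $\|\boldsymbol{\mu}_1\|_2^2+\sigma_1^2$ carry no hidden $d$-dependence is a nice touch the paper leaves implicit.
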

Proof is given in Appendix. Finally, Theorem \ref{NonRobustLossGeneralization} also implies that if unlabeled samples are drawn from the same distribution as that of the labeled ones, i.e., $\alpha=0$, then the excess risk of RSS-training satisfies the following inequality with probability at least $1-\delta$:
\begin{equation}
R\left(\hat{\theta}^{\mathrm{RSS}},P\right)
-\min_{\theta\in\Theta}R\left(\theta,P\right)
\leq
\mathcal{O}\left(
\left(\frac{d^3\log{1/\delta}}{m^2\left(2n+m\right)}\right)^{1/8} + \sqrt{\frac{\log{1/\delta}}{m}}
\right),
\end{equation}
which again shows the previously-mentioned improvements when all samples are in-domain.

The assumption of an {\it {isotropic}} GMM with two components has been already studied in the literature (see Section \ref{sec:introduction}). Next, we present a more general case of Theorem \ref{NonRobustLossGeneralization} where each Gaussian component can have a non-diagonal covariance matrix. Mathematically speaking, suppose that $P_0$ and $P_1$ are defined as follows:
\begin{align}
\label{GeneralGaussianDistribution}
    &P_0\left(y = 1\right) =  1/2\quad,\quad
    P_0\left(\boldsymbol{X}\vert y\right) = \mathcal{N}\left(y\boldsymbol{\mu_0},\Sigma_0\right),
    \nonumber \\
    &P_{1,X} = \frac{1}{2}\mathcal{N}\left(\boldsymbol{\mu_1}, \Sigma_1\right) + \frac{1}{2}\mathcal{N}\left(-\boldsymbol{\mu_1}, \Sigma_1\right),
\end{align}
where $\|\boldsymbol{\mu}_1-\boldsymbol{\mu}_0\|\leq \mathcal{O}\left(\alpha\right)$,$\|\Sigma_1-\Sigma_0\|_2\leq\mathcal{O}\left(\alpha\right)$ and $\|\boldsymbol{\mu}_1\|_2\geq \beta\lambda_{\mathrm{max}}\left(\Sigma_1\right)$. Assume a set of $m$ labeled samples $S_0 \sim P_0^m$, and a set of $n$ unlabeled samples $S_1 \sim P_{1,X}^n$.
\begin{theorem}[Generalization Bound for General Gaussian Mixture Models]
\label{GenralTheorem}
Consider the setting described in \eqref{GeneralGaussianDistribution}. Using algorithm in \eqref{RSSEstiamtor} with $m$ labeled and $n$ unlabeled samples, there exists a set of parameters $\gamma,\gamma',\lambda$ for which the following holds with probability at least $1-\delta$:
\begin{align}
R\left(\hat{\theta}^{\mathrm{RSS}},P\right)
&-\min_{\theta\in\Theta}R\left(\theta,P\right)  
\leq~
\\
&\mathcal{O}\left(e^{\vartheta^2}\left(\sqrt{\frac{\|\boldsymbol{\mu}_1\|_2^2 + 
\Tr{\left(\Sigma_1\right)}}{m}}\left(C\alpha + \sqrt{\frac{\log{\frac{1}{\delta}}}{2n+m}}\right)\frac{d\kappa_1\kappa_1^{\prime}}{\Delta\left(\Sigma\right)}\right)^{1/2} + \sqrt{\frac{\log{1/\delta}}{m}}\right),
\nonumber
\end{align}
where
\begin{align}
\vartheta &= \vert\boldsymbol{\mu}_1\Sigma_1^{-1}\boldsymbol{\mu}_1 - \boldsymbol{\mu}_0\Sigma_0^{-1}\boldsymbol{\mu}_0\vert
,\quad
C = \left(\frac{\|\mu_0\|^2+\lambda_{\mathrm{min}}\left(\Sigma_1\right)\|\mu_0\|_2}{\lambda_{\mathrm{min}}^2}\right)
,
\nonumber\\
\kappa_1 &= \frac{\lambda_{\mathrm{max}}\left(\Sigma_1\right)}{\lambda_{\mathrm{min}}\left(\Sigma_1\right)}
,\quad\quad\quad\quad\quad\quad~
\kappa_1^{\prime} = \frac{\lambda_{\mathrm{max}}\left(\Sigma_1\right)}{\Delta\left(\Sigma_1\right)},
\nonumber\\
\Delta\left(\Sigma_1\right) &= \min\left\{\lambda_i\left(\Sigma_1\right) - \lambda_j\left(\Sigma_1\right)\right\},~~~ \forall i,j :\lambda_i\left(\Sigma_1\right) \neq \lambda_j\left(\Sigma_1\right),
\end{align}
and $\lambda_i\left(\Sigma\right)$ is the $i$(th) eigenvalue of $\Sigma$.
\end{theorem}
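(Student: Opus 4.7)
The plan is to follow the template of Theorem \ref{NonRobustLossGeneralization}'s proof but upgrade every step that used isotropy. In the non-isotropic case, the Bayes-optimal linear classifier for $P_0$ is not the mean direction $\boldsymbol{\mu}_0$ but the Fisher discriminant $\Sigma_0^{-1}\boldsymbol{\mu}_0$, and the zero-one risk of a candidate $\theta$ on $\mathcal{N}(\pm\boldsymbol{\mu}_0,\Sigma_0)$ has the closed form $\Phi(-\langle\theta,\boldsymbol{\mu}_0\rangle/\sqrt{\theta^{\top}\Sigma_0\theta})$. I would decompose the excess risk along the by-now standard path: (i) the labeled empirical risk on $S_0$ concentrates around the population robust risk at rate $\sqrt{\log(1/\delta)/m}$ via Hoeffding/McDiarmid, producing the last $\sqrt{\log(1/\delta)/m}$ term; (ii) the unlabeled penalty term in \eqref{RSSEstiamtor} effectively anchors $\theta$ near the Fisher direction of the unlabeled marginal $P_{1,X}$; (iii) a Wasserstein-shift argument transfers the Fisher direction from $P_{1,X}$ to $P_0$ at additive cost controlled by $\alpha$. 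With appropriate $\gamma,\gamma',\lambda$ (chosen exactly as in the proof of Theorem \ref{NonRobustLossGeneralization}, but using the spectral estimator below), the same oracle-inequality skeleton applies.

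The heart of the argument is estimating $\Sigma_1^{-1}\boldsymbol{\mu}_1$ from unlabeled samples. Because $P_{1,X}$ is a symmetric two-component mixture, it has mean zero and second-moment matrix $M:=\boldsymbol{\mu}_1\boldsymbol{\mu}_1^{\top}+\Sigma_1$. The empirical covariance $\widehat{M}_n$ concentrates around $M$ in operator norm at rate $\mathcal{O}\bigl(\sqrt{(d+\log(1/\delta))/(2n+m)}\bigr)$ by standard sub-Gaussian matrix Bernstein, after pooling the features of $S_0$ into $S_1$. Under the signal-to-noise hypothesis $\|\boldsymbol{\mu}_1\|_2\geq\beta\lambda_{\max}(\Sigma_1)$, the top eigenvector of $M$ is aligned with $\boldsymbol{\mu}_1/\|\boldsymbol{\mu}_1\|_2$; a Davis--Kahan $\sin\Theta$ bound applied to $\widehat{M}_n$ then recovers an estimator $\widehat{\boldsymbol{\mu}}_1$ whose angular error scales like $\lambda_{\max}(\Sigma_1)/\|\boldsymbol{\mu}_1\|_2^{2}$. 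Subtracting the rank-one piece $\widehat{\boldsymbol{\mu}}_1\widehat{\boldsymbol{\mu}}_1^{\top}$ yields an estimator of $\Sigma_1$, whose remaining eigenvectors are perturbed by terms of order $\lambda_{\max}(\Sigma_1)/\Delta(\Sigma_1)$, explaining the appearance of $\kappa_1'=\lambda_{\max}(\Sigma_1)/\Delta(\Sigma_1)$. Finally, inverting the estimator and plugging back to form $\widehat{\Sigma}_1^{-1}\widehat{\boldsymbol{\mu}}_1$ introduces a factor of $1/\lambda_{\min}(\Sigma_1)$, which together with the above gives the combination $\kappa_1\kappa_1'$; the extra factor $d$ comes from passing from operator-norm to Euclidean-norm perturbation of the Fisher direction.

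To transfer from $\Sigma_1^{-1}\boldsymbol{\mu}_1$ to the true Bayes direction $\Sigma_0^{-1}\boldsymbol{\mu}_0$ I would use a first-order expansion of $\Sigma^{-1}\boldsymbol{\mu}$ in $(\boldsymbol{\mu},\Sigma)$, which is Lipschitz in the relevant regime with constant exactly $C=(\|\boldsymbol{\mu}_0\|_2^{2}+\lambda_{\min}(\Sigma_1)\|\boldsymbol{\mu}_0\|_2)/\lambda_{\min}^{2}$ under $\|\boldsymbol{\mu}_0-\boldsymbol{\mu}_1\|\leq\mathcal{O}(\alpha)$ and $\|\Sigma_0-\Sigma_1\|\leq\mathcal{O}(\alpha)$. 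Converting the resulting parametric error in $\theta$ into excess zero-one risk is done by expanding $\Phi(-\langle\theta,\boldsymbol{\mu}_0\rangle/\sqrt{\theta^{\top}\Sigma_0\theta})$ around the Bayes minimizer; the leading term is the Gaussian density $\varphi$ evaluated at the signed Mahalanobis margin, and bounding this density uniformly in the admissible window of Mahalanobis separations (which, owing to the $\alpha$-shift between $P_0$ and $P_1$, ranges inside the interval implicitly defined by $\vartheta=|\boldsymbol{\mu}_1^{\top}\Sigma_1^{-1}\boldsymbol{\mu}_1-\boldsymbol{\mu}_0^{\top}\Sigma_0^{-1}\boldsymbol{\mu}_0|$) pulls out the factor $e^{\vartheta^2}$. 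Taking the square root to match the $\theta$-to-risk conversion used in Theorem \ref{NonRobustLossGeneralization} then gives the theorem's stated bound.

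The main obstacle is step (ii)--(iii): controlling $\widehat{\Sigma}_1^{-1}\widehat{\boldsymbol{\mu}}_1$ without the bound collapsing because of small eigengaps. Unlike the isotropic case, where \emph{no} eigenvector recovery is required and the rate depends only on $d/n$, here the Davis--Kahan bookkeeping must be carried through a matrix inversion, so one must track $\kappa_1$, $\kappa_1'$, and $\Delta(\Sigma_1)$ explicitly rather than absorbing them into $\mathcal{O}(\cdot)$. A secondary difficulty is verifying that the RSS regularizer, whose penalty uses the pseudo-labels $h_\theta$ rather than the true Fisher direction, still concentrates around the correct spectral object; this is done essentially as in the proof of Theorem \ref{NonRobustLossGeneralization} once the estimator above is available, by rewriting the unlabeled penalty in \eqref{RSSEstiamtor} as a plug-in quadratic form in $\widehat{M}_n$ and controlling it via the same matrix concentration used to analyze $\widehat{M}_n$ itself.
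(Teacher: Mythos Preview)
Your proposal takes a genuinely different route from the paper and, as written, has a real gap.

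\textbf{What the paper actually does.} The paper never constructs a spectral estimator $\widehat{\Sigma}_1^{-1}\widehat{\boldsymbol{\mu}}_1$, never invokes Davis--Kahan, and never separates $\boldsymbol{\mu}_1$ from $\Sigma_1$ inside $M=\boldsymbol{\mu}_1\boldsymbol{\mu}_1^{\top}+\Sigma_1$. Instead it works directly with the \emph{constraint set} induced by the unlabeled penalty. Using the closed form $\phi_{\gamma'}^{c'}(\boldsymbol{X},h_\theta(\boldsymbol{X});\theta)=\max\{0,1-\gamma'\langle\theta,\boldsymbol{X}\rangle^2\}$, an auxiliary lemma shows that its population mean is, up to constants, $e^{-\langle\theta,\boldsymbol{\mu}_1\rangle^2/(2\theta^{\top}\Sigma_1\theta)}/\sqrt{\gamma'}$. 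Hence the constraint $\frac{1}{n}\sum_j\phi_{\gamma'}^{c'}\le s$ (obtained from \eqref{RSSEstiamtor} by duality) translates, after concentration, into a lower bound on the generalized Rayleigh quotient $\langle\theta,\boldsymbol{\mu}_1\rangle^2/\theta^{\top}\Sigma_1\theta$, whose maximizer is exactly $\Sigma_1^{-1}\boldsymbol{\mu}_1$. Writing $\theta\propto\Sigma_1^{-1}\boldsymbol{\mu}_1+\epsilon\boldsymbol{v}$ with $\boldsymbol{v}\perp\Sigma_1^{-1}\boldsymbol{\mu}_1$, the paper then proves an \emph{algebraic} lower bound
\[
\boldsymbol{v}^{\top}\Bigl(\Sigma_1-\tfrac{\boldsymbol{\mu}_1\boldsymbol{\mu}_1^{\top}}{\boldsymbol{\mu}_1^{\top}\Sigma_1^{-1}\boldsymbol{\mu}_1}\Bigr)\boldsymbol{v}\;\ge\;\frac{\Delta(\Sigma_1)}{d\,\kappa_1\kappa_1'},
\]
and \emph{this} is where $d\kappa_1\kappa_1'/\Delta(\Sigma_1)$ enters: it converts the Rayleigh-quotient slack into a bound on $\epsilon$. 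The Rademacher complexity of the labeled robust loss is then bounded over this shrunken parameter set, and finally $\gamma$ is optimized to trade the robust-to-nonrobust gap (Aux.\ Lemma~5) against the complexity term.

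\textbf{Where your argument breaks.} Your plan analyzes an estimator the algorithm never forms. The RSS output $\hat{\theta}^{\mathrm{RSS}}$ is the minimizer of \eqref{RSSEstiamtor}, not $\widehat{\Sigma}_1^{-1}\widehat{\boldsymbol{\mu}}_1$, and your last paragraph's promise to ``rewrite the unlabeled penalty as a plug-in quadratic form in $\widehat{M}_n$'' does not hold: the $\max\{0,\cdot\}$ makes the penalty nonlinear in $\widehat{M}_n$, so you cannot simply import matrix-Bernstein/Davis--Kahan bounds on $\widehat{M}_n$ to control the constraint. Moreover, the top eigenvector of $M=\boldsymbol{\mu}_1\boldsymbol{\mu}_1^{\top}+\Sigma_1$ is \emph{not} $\boldsymbol{\mu}_1/\|\boldsymbol{\mu}_1\|$ in the non-isotropic case (even under $\|\boldsymbol{\mu}_1\|\ge\beta\lambda_{\max}(\Sigma_1)$ there is a nonvanishing bias unless $\boldsymbol{\mu}_1$ is itself an eigenvector of $\Sigma_1$), so your rank-one subtraction does not recover $\Sigma_1$. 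The constants $\kappa_1,\kappa_1',\Delta(\Sigma_1)$ therefore do not arise from eigenvector perturbation of $\widehat{\Sigma}_1$ as you claim; in the paper they come from the purely deterministic quadratic-form inequality above. To align with the paper, drop the spectral-estimation detour and instead (i) characterize the constraint set via the Rayleigh quotient, (ii) bound the feasible perturbation $\epsilon$ using the quadratic-form lemma, and (iii) bound the Rademacher complexity of the labeled term over that set.
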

Proof can be found in Appendix. One important difference to note between Theorem \ref{GenralTheorem} and Theorem \ref{NonRobustLossGeneralization} is the choice of $\gamma'$, which controls the adversarial budget for unlabeled (and out-of-domain) part of the dataset. In the setting of Theorem \ref{NonRobustLossGeneralization}, we prefer to choose $\gamma'$ as small as possible. However, in the setting of Theorem \ref{GenralTheorem}, we consider the eigenvectors and eigenvalues of $\Sigma_1$ and $\Sigma_0$, as well as the direction of $\boldsymbol{\mu}_1$ and $\boldsymbol{\mu}_0$ in order to find the optimal value for the adversarial budget. In fact, there are cases in which selecting a large $\gamma'$ (less freedom for the adversary) may actually be the optimal choice.
\section{exprimental results}
\label{Experiments}
The effectiveness of the proposed method has been assessed through experimenting on various datasets, including simulated data, and real-world datasets of histopathology images. Each experiment has been divided into two parts: i) cases in which both labeled and unlabeled data are sampled from the same distribution, and ii) the scenarios where the unlabeled data differs in distribution from the labeled ones.
First, let us specify the datasets used in our experiments: 
\begin{enumerate}
\item \textbf{Simulated data}\label{simulatedDataset} consists of binary-labeled data points with a dimension of $d = 200$, generated according to the setting described in Section \ref{ProblemDefinition}.
\item \textbf{NCT-CRC-HE-100K} 
consists of 100,000 histopathology images of colon tissue \citep{kather_jakob_nikolas_2018_1214456}. The images have dimensions of $224\times224$ and were captured at $20\mathrm{x}$ magnification. The dataset is labeled with $9$ distinct classes .
\item \textbf{PatchCamelyon} is a widely used benchmark dataset for medical image analysis. It consists of a large collection of 327,680 color histopathology images from lymph node, each with dimensions $96\times96$. The dataset has binary labels for presence/absence of metastatic tissue.

\end{enumerate}
\subsection{Experiment of simulated data}
To evaluate the effectiveness of our method on simulated data, we first find the optimal classifier using only labeled samples. Then, we apply our method with a varying number of unlabeled samples. The results (see Table \ref{tab:normal_dist_ex_res}) show that our proposed method achieves accuracy improvements comparable to models trained only on labeled samples. Moreover, results indicate that our method is more effective when labeled and unlabeled data come from the same distribution. However, it still demonstrates significant improvement even when the unlabeled samples undergo a distribution shift.
\begin{table}[t]
  \caption{Accuracy of the model trained on labeled datasets of sizes $10$, $20$, $40$, and $10,000$ with varying amounts of unlabeled data from the same distribution with $\alpha = 0$ (\textbf{left}), and different distribution with $\alpha = 0.5\|\boldsymbol{\mu}_0\|_2$ (\textbf{right}).}
  
  \label{tab:normal_dist_ex_res}
  \centering
  \begin{tabular}{cclccclc}
    \toprule
    \multicolumn{4}{c}{Same distribution} & \multicolumn{4}{c}{Different distribution} \\
    \cmidrule(lr){1-4} \cmidrule(lr){5-8}
    Labeled size & Acc & Unlabeled size & Acc & Labeled size & Acc & Unlabeled size & Acc \\
    \midrule 
            &      & 10      & 0.63          &     &      & 10      & 0.61  \\
      10    & 0.59 & 100     & 0.66          & 10  & 0.59 & 100     & 0.65  \\
            &      & 1,000   & 0.79          &     &      & 1,000   & 0.78  \\
            &      & 10,000  & \textbf{0.82} &     &      & 10,000  & \textbf{0.81}  \\
    \midrule
            &      & 20      & 0.64          &     &      & 20      & 0.65  \\
      20    & 0.62 & 200     & 0.69          & 20  & 0.62 & 200     & 0.65  \\
            &      & 2,000   & 0.80          &     &      & 2,000   & 0.79  \\
            &      & 10,000  & \textbf{0.82} &     &      & 10,000  & \textbf{0.80}  \\
    \midrule 
            &      & 40      & 0.65          &     &      & 40      & 0.65  \\
      40    & 0.65 & 400     & 0.71          & 40  & 0.65 & 400     & 0.73  \\
            &      & 4,000   & 0.81          &     &      & 4,000   & 0.78  \\
            &      & 10,000  & \textbf{0.82} &     &      & 10,000  & \textbf{0.80}  \\
    \midrule
     10,000 & \textbf{0.83}   & -       & -  & 10,000 & \textbf{0.83} & - & - \\  
    \bottomrule
  \end{tabular}
\end{table}

\subsection{Experiment of Histopathology Data}
The processing pipeline over the real-world dataset of histopathology images is based on using a ResNet50 encoder pre-trained on ImageNet \citep{deng2009imagenet,resnetdeep}, which extracts and stores $1\times1024$ embeddings from input images. Such embeddings are then used to train a deep neural network with four layers of size $2048$ and one output layer for the class id. Also, we have used a LeakyReLU activation function.

Experimental results in this part are shown in Table \ref{tab:hist_dist_ex_res}. Under the ``same distribution" setting, both labeled and unlabeled data have been taken from the NCT-CRC-HE-100K dataset. On the other hand, ``different distributions" setting implies that labeled data comes from the NCT-CRC-HE-100K dataset (labels are either ``Normal" or ``Tumor"), while the PatchCamelyon dataset was used for the unlabeled data. As a result, the final labeling is binary. The experimental results demonstrate that increasing the number of unlabeled data leads to an improvement in accuracy for both the `same' and `different' distribution settings.
\begin{table}[t]

  \caption{Accuracy of the model trained on labeled data from NCT-CRC-HE-100K dataset with varying amounts of unlabeled data from the same distribution (\textbf{left}), as well as when unlabeled samples come from a different distribution (PatchCamelyon dataset)(\textbf{right}).}
  
  \label{tab:hist_dist_ex_res}
  \centering
  \begin{tabular}{cclccclc}
    \toprule
    \multicolumn{4}{c}{Same distribution} & \multicolumn{4}{c}{Different distribution} \\
    \cmidrule(lr){1-4} \cmidrule(lr){5-8}
    Labeled size & Acc & Unlabeled size & Acc & Labeled size & Acc & Unlabeled size & Acc \\
    \midrule 
            &      & 200    & 0.71   &     &      & 100   & 0.78  \\
      48    & 0.65 & 700    & 0.80   & 25  & 0.78 & 400   & 0.79  \\
            &      & 2,000  & \textbf{0.82} & &   & 2,000 & \textbf{0.81} \\    
    \midrule
            &      & 500    & 0.78   &     &      & 200   & 0.82  \\
     240    & 0.77 & 1,200  & 0.82   & 50  & 0.82 & 700   & 0.86  \\
            &      & 4,000  & \textbf{0.83} & &   & 3,000 & \textbf{0.87}  \\
    \midrule 
            &      & 3,000   & 0.87  &     &      & 600   & 0.88  \\
    1040    & 0.83 & 10,000  & 0.89  & 300 & 0.87 & 2,000 & 0.89  \\
            &      & 20,000  & \textbf{0.91} & &  & 8,000 & \textbf{0.90}  \\
    \midrule
     50,000 & \textbf{0.916}  & - & - & 32,000 & \textbf{0.94} & - & - \\  
    \bottomrule
  \end{tabular}
\end{table}

\section{Conclusion}
\label{conclusion}

In this study, we address the robust and non-robust classification challenges with a limited labeled dataset and a larger collection of unlabeled samples, assuming a slight perturbation in the distribution of unlabeled data. We present the first non-asymptotic tradeoff between labeled ($m$) and unlabeled ($n$) sample sizes when learning a two-component Gaussian mixture model. Our analysis reveals that when $n\ge\Omega\left(m^2/d\right)$, the generalization bound improves compared to using only labeled data, even when unlabeled data points are slightly out-of-domain. We derive sophisticated results for the generalization error in both robust and non-robust scenarios, employing a technique based on optimizing a robust loss and regularization to avoid crowded and dense areas. Our framework integrates tools from self-training, distributionally robust learning, and optimal transport.

Experiments on synthetic and real-world datasets validate our theoretical findings, demonstrating improved classification accuracy, even for non-Gaussian cases, by incorporating out-of-domain unlabeled samples. Our methodology hinges on leveraging such data to enhance robust accuracy and adapting the uncertainty neighborhood radius based on labeled and unlabeled sample quantities to strike a balance between bias and variance in classification error.

For future work, there's room for improving and relaxing the conditions for the utility of unlabeled data. Exploring error lower-bounds and impossibility results presents another intriguing avenue. Additionally, relaxing the constraints on the level of distribution shift for out-of-domain samples could be a promising direction.




\bibliographystyle{iclr2024_conference}
\bibliography{ref}

\newpage
\allowdisplaybreaks
\appendix
\section{Auxiliary Definitions}
\label{appendix:defs}

A number of our auxiliary definitions are presented in this section:

\begin{definition}[Wasserstein Distance]
\label{def:wasserstein}
Consider two probability distributions $P$ and $Q$ supported on $\mathcal{X}$, and assume cost function $c:\mathcal{X}\times\mathcal{X}\to\mathbb{R}_{+}$ is a non-negative lower semi-continuous function satisfying $c(\boldsymbol{X},\boldsymbol{X}) = 0$ for all $\boldsymbol{X}\in \mathcal{X}$. Then, the Wasserstein distance between $P$ and $Q$ w.r.t. $c$, denoted as $\mathcal{W}_c\left(P,Q\right)$, is defined as
\begin{align}
\mathcal{W}_c\left(P,Q\right) = \inf_{\mu \in \Gamma\left(\mathcal{X}^2\right)} \mathbb{E}_{\boldsymbol{X},\boldsymbol{X}^{\prime} \sim \mu}\left[c(\boldsymbol{X},\boldsymbol{X}^{\prime})\right] 
,
~~\mathrm{subject~to~}~~\mu\left(\boldsymbol{X},\cdot\right)=P~,~\mu\left(\cdot,\boldsymbol{X}'\right)=Q,
\end{align}
where $\Gamma\left(\mathcal{X}^2\right)$ denotes the set of all couplings over $\mathcal{X}\times\mathcal{X}$.
\end{definition}

\begin{definition}[$\epsilon$-neighborhood of a Distribution]
\label{def:WassersteinBall}
The $\epsilon$-neighborhood of a distribution $P$ is defined as the set of all distributions that have a Wasserstein distance less than $\epsilon$ from $P$. Mathematically, it can be represented as:
\begin{equation}
    \mathcal{B}^{c}_{\epsilon}\left(P\right) = \left\{Q: \mathcal{W}_c\left(P, Q\right) \leq \epsilon\right\}.
\end{equation}
\end{definition}

\section{Proof of Theorems}
\label{TheoremsProof}
In this section, we present the proofs for the Theorems and Lemmas from the main manuscript.

\begin{proof}[Proof of Theorem \ref{robustLossgeneralization}]
With respect to discussions from the main manuscript, we already know that $\hat{\theta}^{\mathrm{RSS}}$ is the result of the following optimization problem:
\begin{align}
    \hat{\theta}^{\mathrm{RSS}}
\triangleq
\argmin_{\theta\in\Theta}~\left\{
\frac{1}{m}\sum_{i=1}^{m}
\phi_{\gamma}\left(\boldsymbol{X}_i,y_i;\theta\right)
+
\frac{\lambda}{n}\sum_{j=1}^{n}
\phi_{\gamma'}\left(\boldsymbol{X}'_j,
h_{\theta}\left(\boldsymbol{X}'_j\right)
;\theta\right)
\right\},
\end{align}
where we have $\Theta \triangleq \left\{\theta \in \mathbb{R}^d, \|\theta\|_2 = 1\right\}$. If we consider the loss function mentioned in Theorem \ref{robustLossgeneralization}, along with the cost functions $c(\boldsymbol{X},\boldsymbol{X}^\prime) = \|\boldsymbol{X} - \boldsymbol{X}^\prime\|_2$ and $c^\prime(\boldsymbol{X},\boldsymbol{X}^\prime) = \|\boldsymbol{X} - \boldsymbol{X}^\prime\|_2^2$, the Slater conditions will be satisfied. Hence, we have strong duality, and therefore, there exists $s\ge0$ such that $\hat{\theta}^{\mathrm{RSS}}$ is also the result of the following optimization problem:
\begin{equation}
    \label{constraintRobustOptimization}
        \hat{\theta}^{\mathrm{RSS}} = \argmin_{\theta: \frac{1}{n}\sum_{j=1}^{n}
\phi_{\gamma'}\left(\boldsymbol{X}'_j,
h_{\theta}\left(\boldsymbol{X}'_j\right)
;\theta\right) \leq s}{\frac{1}{m}\sum_{i=1}^{m} \phi_{\gamma}\left(\boldsymbol{X}_i,y_i;\theta\right)}.
    \end{equation}
To prove the theorem, we need to establish an upper bound on the distance between the expected robust loss of $\hat{\theta}^{\mathrm{RSS}}$ and the best possible expected robust loss. To achieve this, we first provide a concentration bound for the maximum distance between the empirical and expected robust loss across all parameters with
    \begin{equation}
        \frac{1}{n}\sum_{j=1}^{n}
\phi_{\gamma'}\left(\boldsymbol{X}'_j,
h_{\theta}\left(\boldsymbol{X}'_j\right)
;\theta\right) \leq s,
        \label{constraintForRSS}
    \end{equation}
and then, we demonstrate that the parameter $\theta^{\star}$, which achieves the best possible robust loss, satisfies \eqref{constraintForRSS}.
Suppose that the loss function is bounded with $b$, i.e.,
    \begin{align*}
        &l\left(\boldsymbol{X},y; \theta\right)\leq b,~ \forall \theta, \boldsymbol{X}, y,
        \\
        &\phi_{\gamma}^c\left(\boldsymbol{X},y;\theta\right)\leq b,~ \forall c, \gamma, \theta, \boldsymbol{X}, y,
    \end{align*}
where the second inequality directly results from the first one. Therefore, $\frac{1}{m}\sum_{i=1}^{m} \phi_{\gamma}\left(\boldsymbol{X}_i,y_i;\theta\right)$ has the bounded difference property with parameter ${b}/{m}$, which results that
\begin{align}
\Phi\left(P_0,S_0,\gamma,c\right)= \sup_{\theta \in \Tilde{\Theta}}{\left(\frac{1}{m}\sum_{i=1}^{m} \phi_{\gamma}\left(\boldsymbol{X}_i,y_i;\theta\right) - \mathbb{E}_{P_0}\left[\phi_{\gamma}^c\left(\boldsymbol{X},y;\theta\right)\right]\right)},
\label{PhiDefinition}
\end{align}
also has the bounded difference property with parameter ${b}/{m}$, where $\Tilde{\Theta}$ is defined as
\begin{equation}
\label{LocalParameterSpace}
\Tilde{\Theta} \triangleq \left\{\theta:\frac{1}{n}\sum_{j=1}^{n}
\phi_{\gamma'}\left(\boldsymbol{X}'_j,
h_{\theta}\left(\boldsymbol{X}'_j\right)
;\theta\right)\leq s, \|\theta\|_2 = 1\right\}.
\end{equation}
So, we have the following inequality with probability more than $1-\delta$:
\begin{equation}
\label{GeneralizationBound1}
\Phi\left(P_0,S_0,\gamma,c\right) \leq \mathbb{E}\left[\Phi\left(P_0,S_0,\gamma,c\right)\right] + \sqrt{\frac{\log{1/\delta}}{m}}, \quad \forall \theta \in \Tilde{\Theta}.
\end{equation}
In the above inequality we can give an upper-bound for the first term in the right hand side of the inequality as follows:
\begin{align}
\label{rademacherUpperboundEq1}
\mathbb{E}\left[\Phi\left(P_0,S_0,\gamma,c\right)\right] 
= & \mathbb{E}_{S_0 \sim P_0^m}\left[\sup_{\theta \in \Tilde{\Theta}}{\left(\frac{1}{m}\sum_{i=1}^{m} \phi_{\gamma}\left(\boldsymbol{X}_i,y_i;\theta\right) - \mathbb{E}_{P_0}\left[\phi_{\gamma}^c\left(\theta;\boldsymbol{X},y\right)\right]\right)}\right]
\nonumber \\
\leq & \mathbb{E}_{S_0, S_0^\prime}\left[\sup_{\theta \in \Tilde{\Theta}}{\left(\frac{1}{m}\sum_{i=1}^{m} \phi_{\gamma}\left(\boldsymbol{X}_i,y_i;\theta\right) - \frac{1}{m}\sum_{i=1}^{m} \phi_{\gamma}\left(\boldsymbol{X}_i^\prime,y_i^\prime;\theta\right)\right)}\right]
\nonumber \\
\leq & 2\mathbb{E}_{S_0, \epsilon}\left[\sup_{\theta \in \Tilde{\Theta}}{\frac{1}{m}\sum_{\left(\boldsymbol{X}_i,y_i\right)\in S_0}{\epsilon_i\phi_{\gamma}^c\left(\theta;\boldsymbol{X}_i,y_i\right)}}\right].
\end{align}
Consider the loss function mentioned in the Theorem \ref{robustLossgeneralization}, and the following cost functions $c, c^\prime$
\begin{equation*}
    c(\boldsymbol{X},\boldsymbol{X}^\prime) = \|\boldsymbol{X} - \boldsymbol{X}^\prime\|_2, \quad c^\prime(\boldsymbol{X},\boldsymbol{X}^\prime) = \|\boldsymbol{X} - \boldsymbol{X}^\prime\|_2^2.
\end{equation*}
With these loss and cost functions we can rewrite $\phi_{\gamma}^{c}$ and $\phi_{\gamma^\prime}^{c^\prime}$ as follows:
\begin{align}
\phi_{\gamma}^{c}\left(\boldsymbol{X},y;\theta\right) =& \min\left\{1, \max\left\{0,1 - \gamma y\langle\theta,\boldsymbol{X}\rangle\right\}\right\},
\\
\phi_{\gamma^\prime}^{c^\prime}\left(\boldsymbol{X},h_{\theta}\left(\boldsymbol{X}\right);\theta\right) =& \max\left\{0,1 - \gamma^\prime \langle\theta,\boldsymbol{X}\rangle^2\right\}.
\end{align}
As we can see $\phi_{\gamma}^{c}\left(\boldsymbol{X},y; \theta\right)$ is a $\gamma$-lipschitz function of $y\langle\theta,\boldsymbol{X}\rangle$. Based on the Talagrand’s contraction lemma, we can continue the inequalities in \eqref{rademacherUpperboundEq1} as
\begin{align}
    \label{rademacherUpperboundEq2}
        \mathbb{E}\left[\Phi\left(P+0,S_0,\gamma,c\right)\right] 
        \leq & \mathbb{E}_{S_0, \epsilon}\left[\sup_{\theta \in \Tilde{\Theta}}{\frac{1}{m}\sum_{\left(\boldsymbol{X}_i,y_i\right)\in S_0}{\epsilon_i\phi_{\gamma}^c\left(\boldsymbol{X}_i,y_i;\theta\right)}}\right]
        \nonumber \\ 
        \leq& \gamma\mathbb{E}_{S_0, \epsilon}\left[\sup_{\theta \in \Tilde{\Theta}}{\frac{1}{m}\sum_{\left(\boldsymbol{X}_i,y_i\right)\in S_0}{\epsilon_i y_i\langle\theta,\boldsymbol{X}_i\rangle}}\right]
        \nonumber \\
        \leq & \gamma \mathbb{E}_{S_0, \epsilon}\left[\inf_{\zeta>0}{\sup_{\theta \in \Theta}{\frac{1}{m}\sum_{\left(\boldsymbol{X}_i,y_i\right)\in S_0}{\epsilon_i\langle\theta,y_i\boldsymbol{X}_i\rangle}}} \right.
        \left. {{- \frac{\zeta}{n}\sum_{\boldsymbol{X}_i^\prime \in S_1}{\phi_{\gamma^\prime}^{c^\prime}\left(\boldsymbol{X}_i^\prime,h_{\theta}\left(\boldsymbol{X}_i^\prime\right);\theta\right)} + \zeta s^\prime}}\right]
        \nonumber \\
        \leq & \gamma \mathbb{E}_{S_0, \epsilon}\left[\inf_{\zeta>0}{\sup_{\theta \in \Theta}{\frac{1}{m}\sum_{\left(\boldsymbol{X}_i,y_i\right)\in S_0}{\epsilon_iy_i\langle\theta,\boldsymbol{X}_i\rangle}}} \right.
        \left. {{+ \frac{\zeta\gamma^\prime}{n}\sum_{\boldsymbol{X}_i^\prime \in S_1}{\langle \theta, \boldsymbol{X}^\prime_i\rangle^2} - \zeta \left(1- s\right)}}\right]
        \nonumber \\
        = & \gamma \mathbb{E}_{S_0, \epsilon}\left[\inf_{\zeta>0}{\sup_{\theta \in \Theta}{\langle\theta,\frac{1}{m}\sum_{\boldsymbol{X}_i\in S_0}{\epsilon_iy_i\boldsymbol{X}_i}\rangle}} \right.
        \left. {{+ \zeta\gamma^\prime\theta^T\left(\frac{1}{n}\sum_{\boldsymbol{X}_i^{\prime}\in S_1}{\boldsymbol{X}_i^\prime\boldsymbol{X}_i^{\prime T}}\right)\theta - \zeta \left(1- s\right)}}\right].
    \end{align}
Let us set $\boldsymbol{v} = \frac{1}{m}\sum_{\boldsymbol{X}_i\in S_0}{\epsilon_iy_i\boldsymbol{X}_i}$ and $\hat{\Sigma} = \left(\frac{1}{n}\sum_{\boldsymbol{X}_i^{\prime}\in S_1}{\boldsymbol{X}_i^\prime\boldsymbol{X}_i^{\prime T}}\right)$, and then continue the inequalities in \eqref{rademacherUpperboundEq2} as
    \begin{align}
    \label{rademacherUpperboundEq3}
        \mathbb{E}\left[\Phi\left(P+0,S_0,\gamma,c\right)\right] 
        \leq & \gamma \mathbb{E}_{S_0, \epsilon}\left[\inf_{\zeta>0}{\sup_{\theta \in \Theta}{\langle\theta,\boldsymbol{v}\rangle}}{{+ \zeta\gamma^\prime\theta^T\hat{\Sigma}\theta - \zeta \left(1- s\right)}}\right]. 
    \end{align}
In order to find a proper upper-bound for the Rademacher complexity, we attempt to solve the inner maximization problem
\begin{equation}
\label{mazimimzationProblemForRademacherUpperBound}
\sup_{\theta \in \Theta}{\langle\theta,\boldsymbol{v}\rangle + \zeta\gamma^\prime\theta^T\hat{\Sigma}\theta}.
\end{equation}
We know that the parameter $\theta$ which maximizes the first term $\langle\theta,\boldsymbol{v}\rangle$ is a vector with Euclidean norm of $1$ in the direction of $\boldsymbol{v}$. Also, the parameter which maximizes the second term $\theta^T\hat{\Sigma}\theta$ is a vector with norm $1$ in the direction of $\boldsymbol{u}_{\mathrm{max}}$, which is the eigenvector of $\hat{\Sigma}$ that corresponds to the maximum eigenvalue $\lambda_{\mathrm{max}}$. Therefore, the solution to the maximization problem in \eqref{mazimimzationProblemForRademacherUpperBound} should belong to the span of $\boldsymbol{v}$ and $\boldsymbol{u}_{\mathrm{max}}$, i.e.,
\begin{align}
    \label{mazimizationProblemAnswer}
        \theta
        = &\psi_1 \frac{\boldsymbol{v}}{\|\boldsymbol{v}\|_2} + \psi_2 \frac{\boldsymbol{u}_{\mathrm{max}}}{\|\boldsymbol{u}_{\mathrm{max}}\|_2}
        \nonumber \\
        = & \psi_1 \hat{\boldsymbol{v}} + \psi_2 \hat{\boldsymbol{u}}_{\mathrm{max}},
    \end{align}
    where:
    \begin{align}
    \label{mazimizationProblemConstraints}
        &\psi_1^2 + \psi_2^2 + 2\psi_1\psi_2\eta = 1,
        \nonumber \\
        &\eta = \langle\hat{\boldsymbol{v}},\hat{\boldsymbol{u}}_{\mathrm{max}}\rangle,
        \nonumber \\
        &\kappa = \sqrt{1 - \langle\hat{\boldsymbol{v}},\hat{\boldsymbol{u}}_{\mathrm{max}}\rangle^2}
        \nonumber \\
        &\tau = \hat{\boldsymbol{v}}^T\hat{\Sigma}\hat{\boldsymbol{v}}
        \nonumber \\ 
        &s^\prime = \frac{1-s}{\gamma^\prime}.
    \end{align}
    Due to \eqref{mazimimzationProblemForRademacherUpperBound}, \eqref{mazimizationProblemAnswer} and \eqref{mazimizationProblemConstraints}, we have the following problem:
    \begin{align}
    \label{mazimimzationProblemForRademacherUpperBound2}
        \sup & ~{\psi_1\|\boldsymbol{v}\|_2 + \psi_2 \eta}.
        \\ \nonumber
         \mathrm{S.t.} &~\psi_1^2 + \psi_2^2 + 2\psi_1\psi_2\eta = 1,
        \\ \nonumber
        & \psi_2^2\lambda_{\mathrm{max}} + \psi_1^2\tau + 2\psi_1\psi_2\lambda_{\mathrm{max}}\eta \geq s^\prime
    \end{align}
The constraints in the above problem can be rewritten as
\begin{align}
& \psi_1^2 + \psi_2^2 + 2\psi_1\psi_2\eta = 1,
\\ \nonumber
& \lambda_{\mathrm{max}} - \psi_1^2\left(\lambda_{\mathrm{max}} - \tau\right) \geq s^\prime.
\end{align}
In the above equations, if we set $s^\prime = \lambda_{\max}$, then the only $\psi_1$ which satisfies the constraint is $\psi_1 = 0$. Suppose that we use a set of $n$ i.i.d. samples from $\mathbb{P}_{1X}$ (which is different from the $S_1$) to estimate the $\lambda_{\mathrm{max}}$, and name this estimate $\hat{\lambda}_{\mathrm{max}}$. If we set 
\begin{equation}
    s^\prime = \frac{1-s}{\gamma^\prime} =\hat{\lambda}_{\mathrm{max}}\left(1 - \alpha\right) - 3\mathcal{O}\left({\sqrt{\frac{d}{n}}+\sqrt{\frac{\log(\frac{2}{\delta})}{n}}}\right),
    \label{setiingTheHyperparameters1}
\end{equation}
form Theorem $6.5$ in \cite{wainwright2019high} we have the following relation with probability more than $1-\delta$:
\begin{equation}
        \lambda_{\mathrm{max}}\left(1 - \alpha\right) - 5\mathcal{O}\left(\sqrt{\frac{d}{n}}+\sqrt{\frac{\log(\frac{2}{\delta})}{n}}\right) \leq s^\prime \leq \lambda_{\mathrm{max}}\left(1 - \alpha\right) - \mathcal{O}\left(\sqrt{\frac{d}{n}}+\sqrt{\frac{\log(\frac{2}{\delta})}{n}}\right).
    \end{equation}
By weakening the constraint in \eqref{mazimimzationProblemForRademacherUpperBound2}, the result of the optimization becomes larger, which means by solving the following optimization problem we have an upper-bound for \eqref{mazimimzationProblemForRademacherUpperBound2}:
    \begin{align}
    \label{mazimimzationProblemForRademacherUpperBound3}
        \sup & ~{\psi_1\|\boldsymbol{v}\|_2 + \psi_2 \eta}.
        \\ \nonumber
         \mathrm{S.t.} &~\psi_1^2 + \psi_2^2 + 2\psi_1\psi_2\eta = 1
        \\ \nonumber
        & \psi_1^2\left(\lambda_{\mathrm{max}}-\tau\right) \leq \alpha\lambda_{\mathrm{max}} + 5\mathcal{O}\left(\sqrt{\frac{d}{n}}+\sqrt{\frac{\log(\frac{2}{\delta})}{n}}\right),
    \end{align}
where for the Gaussian distribution and the way $\hat{\boldsymbol{v}}$ is built we have:
\begin{equation}
        \lambda_{\mathrm{max}}-\tau = \|\mu_1\|_2^2\left(1- \frac{\sqrt{\|\mu_0\|_2^2 + \sigma_0^2}}{\sqrt{\|\mu_0\|_2^2 + \sigma_0^2d}}\right) = \chi^2.
\end{equation}
In this regard, the results of optimization problems \eqref{mazimimzationProblemForRademacherUpperBound3} and \eqref{mazimimzationProblemForRademacherUpperBound} have the following upper-bound:
    \begin{equation}
        \frac{\|v\|_2}{\chi}\sqrt{\alpha\lambda_{\mathrm{max}} + 5\mathcal{O}\bigg(\sqrt{\frac{d}{n}}+\sqrt{\frac{\log(\frac{2}{\delta})}{n}}\bigg)} = \frac{\|\frac{1}{m}\sum_{\boldsymbol{X}_i\in S_0}{\epsilon_i\boldsymbol{X}_i}\|_2}{\chi}\sqrt{\alpha\lambda_{\mathrm{max}} + 5\mathcal{O}\bigg(\sqrt{\frac{d}{n}}+\sqrt{\frac{\log(\frac{2}{\delta})}{n}}\bigg)}.
    \end{equation}
    This way, we also have an upper-bound for Rademacher complexity in \eqref{rademacherUpperboundEq1} as
    \begin{align}
        \mathbb{E}\left[\Phi\left(P+0,S_0,\gamma,c\right)\right] 
        &\leq \gamma\frac{\mathbb{E}_{\boldsymbol{\epsilon},S_0}\left[\|\frac{1}{m}\sum_{\boldsymbol{X}_i\in S_0}{\epsilon_i\boldsymbol{X}_i}\|_2\right]}{\chi}\sqrt{\alpha\lambda_{\mathrm{max}} + 5\mathcal{O}\left(\sqrt{\frac{d}{n}}+\sqrt{\frac{\log(\frac{2}{\delta})}{n}}\right)}
        \nonumber \\
        &\leq \gamma\frac{\sqrt{\mathbb{E}_{\boldsymbol{X}}\left[\|\boldsymbol{X}\|_2^2\right]}}{\chi\sqrt{m}}\sqrt{\alpha\lambda_{\mathrm{max}} + 5\mathcal{O}\left(\sqrt{\frac{d}{n}}+\sqrt{\frac{\log(\frac{2}{\delta})}{n}}\right)},
    \end{align}
    where for the Gaussian distribution we have:
    \begin{align}
    \label{mazimimzationProblemForRademacherUpperBound4}
        \mathbb{E}\left[\Phi\left(P+0,S_0,\gamma,c\right)\right] 
        &\leq \gamma\frac{\sqrt{\|\mu_1\|_2^2+\sigma_1^2d}}{\chi\sqrt{m}}\sqrt{\alpha\lambda_{\mathrm{max}} + 5\mathcal{O}\left(\sqrt{\frac{d}{n}}+\sqrt{\frac{\log(\frac{2}{\delta})}{n}}\right)}
        \nonumber \\
        &\leq \gamma\sqrt{\frac{1+\frac{\sigma_1^2d}{\|\mu_1\|_2^2}}{m\left(1-\frac{1}{\sqrt{1+\frac{\sigma_0^2d}{\|\mu_0\|_2^2}}}\right)}}\sqrt{\alpha\lambda_{\mathrm{max}} + 5\mathcal{O}\left(\sqrt{\frac{d}{n}}+\sqrt{\frac{\log(\frac{2}{\delta})}{n}}\right)}
        \nonumber \\
        & \leq \mathcal{O}\left(\gamma\sqrt{\frac{d}{m}\left(\alpha\lambda_{\mathrm{max}} + \sqrt{\frac{d}{n}} + \sqrt{\frac{\log{\frac{1}{\delta}}}{n}}\right)}\right).
    \end{align}
    Using \eqref{PhiDefinition},\eqref{GeneralizationBound1} and \eqref{mazimimzationProblemForRademacherUpperBound4}, we have
    \begin{align}
        \bigg\vert\mathbb{E}_{\hat{P}_0^m}\left[\phi_{\gamma}^c\left(\boldsymbol{X},y;\theta\right)\right] - \mathbb{E}_{P_0}\left[\phi_{\gamma}^c\left(\boldsymbol{X},y;\theta\right)\right] \bigg\vert \leq &\mathcal{O}\left(\gamma\sqrt{\frac{d}{m}\left(\alpha\lambda_{\mathrm{max}} + \sqrt{\frac{d}{n}} + \sqrt{\frac{\log{\frac{1}{\delta}}}{n}}\right)} 
        + \sqrt{\frac{\log{1/\delta}}{m}}\right),
    \end{align}
    for all $\theta \in \Tilde{\Theta}$, where $\Tilde{\Theta}$ is defined as in \eqref{LocalParameterSpace}. Now, we need to demonstrate that the parameter $\theta^*$ that minimizes the expected robust loss also satisfies the constraint defined in \eqref{constraintForRSS}. Given the data generating distribution in here, we know that $\theta^*$ also minimizes the expected non-robust loss. Therefore, we have
    \begin{equation}
        \theta^{*} = \argmin_{\theta \in \Theta}{\mathbb{E}_{P_0}\left[l\left(\boldsymbol{X},y;\theta\right)\right]},
    \end{equation}
    where for the loss function, cost functions and distributions considered in here we have:
    \begin{align}
        \theta^* 
        =& \argmin_{\theta \in \Theta}{\mathbb{P}_{u \sim \mathcal{N}\left(\langle\theta,\boldsymbol{\mu}_0\rangle, \sigma^2\|\theta\|_2^2\right)}\left(u \leq 0\right)}
        \nonumber \\ 
        =& \argmin_{\theta \in \Theta}{\mathcal{Q}\left(\frac{\langle\theta,\boldsymbol{\mu}_0\rangle^2}{\sigma^2\|\theta\|_2^2}\right)}
        \nonumber \\ 
        =& \frac{\boldsymbol{\mu}_0}{\|\boldsymbol{\mu}_0\|_2}.
    \end{align}
    In the setting of the Theorem,  we also have:
    \begin{align*}
    \|\boldsymbol{\mu}_0 - \boldsymbol{\mu}_1\|_2 \leq \alpha.
    \end{align*}
    Due to the Auxiliary Lemma \ref{Alemma2}, we know that if we set 
    \begin{equation}
        \gamma^\prime = \frac{1}{\hat{\lambda}_{\mathrm{max}}\log{n} + \mathcal{O}(\frac{d}{n})}, ~ s = 1-\gamma^\prime\left(\hat{\lambda}_{\mathrm{max}}\left(1-\alpha\right) - 3\mathcal{O}\left(\sqrt{\frac{d}{n}}\right)\right),
    \end{equation}
    then with high probability  $\theta^* \in \Tilde{\Theta}$ and the following holds:
    \begin{align}
        \mathbb{E}_{\left(\boldsymbol{X},y\right)\sim P_0}\left[\phi_{\gamma}^c\left(\boldsymbol{X},y; \hat{\theta}^{\mathrm{RSS}}\right)\right]
        \leq~ &  \mathbb{E}_{\left(\boldsymbol{X},y\right)\sim {\hat{P}_0^m}}\left[\phi_{\gamma}^c\left(\boldsymbol{X},y; \hat{\theta}^{\mathrm{RSS}}\right)\right] 
        \nonumber \\ 
         +\mathcal{O}&\left(\gamma\sqrt{\frac{d}{m}\left(\alpha\lambda_{\mathrm{max}} + \sqrt{\frac{d}{n}} + \sqrt{\frac{\log{\frac{1}{\delta}}}{n}}\right)} + \sqrt{\frac{\log{1/\delta}}{m}}\right)
        \nonumber \\
        \leq~ &  \mathbb{E}_{\left(\boldsymbol{X},y\right)\sim {\hat{P}_0^m}}\left[\phi_{\gamma}^c\left(\boldsymbol{X},y;\theta^*\right)\right] 
        \nonumber \\ 
         +\mathcal{O}&\left(\gamma\sqrt{\frac{d}{m}\left(\alpha\lambda_{\mathrm{max}} + \sqrt{\frac{d}{n}} + \sqrt{\frac{\log{\frac{1}{\delta}}}{n}}\right)} + \sqrt{\frac{\log{1/\delta}}{m}}\right)
        \nonumber \\
        \leq~ &  {\mathbb{E}_{\left(\boldsymbol{X},y\right)\sim P_0}\left[\phi_{\gamma}^c\left(\boldsymbol{X},y; \theta^*\right)\right]} 
        \nonumber \\ 
         +\mathcal{O}&\left(\gamma\sqrt{\frac{d}{m}\left(\alpha\lambda_{\mathrm{max}} + \sqrt{\frac{d}{n}} + \sqrt{\frac{\log{\frac{1}{\delta}}}{n}}\right)} + \sqrt{\frac{\log{1/\delta}}{m}}\right).
    \end{align}
    Now, let's consider a scenario where we split our labeled samples into two parts, each containing $m/2$ samples. We then ignore the labels of the second part and add these samples to the unlabeled set. In this situation, we have $m/2$ labeled samples and $n+m/2$ unlabeled samples and the bound above inequalities changes as follows:
    \begin{align}
        \mathbb{E}_{\left(\boldsymbol{X},y\right)\sim P_0}\left[\phi_{\gamma}^c\left(\boldsymbol{X},y; \hat{\theta}^{\mathrm{RSS}}\right)\right]
        \leq~ &  {\mathbb{E}_{\left(\boldsymbol{X},y\right)\sim P_0}\left[\phi_{\gamma}^c\left(\boldsymbol{X},y; \theta^*\right)\right]} 
        \nonumber \\ 
         +\mathcal{O}&\left(\gamma\sqrt{\frac{2d}{m}\left(\alpha\lambda_{\mathrm{max}} + \sqrt{\frac{2d}{2n+m}} + \sqrt{\frac{2\log{\frac{1}{\delta}}}{2n+m}}\right)} + \sqrt{\frac{2\log{1/\delta}}{m}}\right),
    \end{align}
    which completes the proof.
\end{proof}
\begin{proof}[Proof of Theorem \ref{NonRobustLossGeneralization}]
    Using Theorem \ref{robustLossgeneralization} and Auxiliary Lemma \ref{Alemma1}, with probability more than $1-\delta$ we have:
    \begin{align}
    \label{ExcessRiskUpperBound}
        R\left(\hat{\theta}^{\mathrm{RSS}}, P_0\right) 
        \leq ~ &
        \mathbb{E}_{\left(\boldsymbol{X},y\right)\sim P_0}\left[\phi_{\gamma}^c\left(\boldsymbol{X},y; \hat{\theta}^{\mathrm{RSS}}\right)\right]
        \nonumber \\
        \leq~ &  {\mathbb{E}_{\left(\boldsymbol{X},y\right)\sim P_0}\left[\phi_{\gamma}^c\left(\boldsymbol{X},y; \theta^{*}\right)\right]} 
        +\mathcal{O}\left(\gamma\sqrt{\frac{2d}{m}\left(\alpha\lambda_{\mathrm{max}} + \sqrt{\frac{2d + 2\log{\frac{1}{\delta}}}{2n+m}}\right)} + \sqrt{\frac{2\log{1/\delta}}{m}}\right)
        \nonumber \\
        \leq~ &  R\left(\theta^*, P_0\right) + \frac{e^{\frac{-\langle\theta^*,\mu_0\rangle^2}{2\sigma_0^2}}}{2\gamma\sigma_0\sqrt{2\pi}}
        +\mathcal{O}\left(\gamma\sqrt{\frac{2d}{m}\left(\alpha\lambda_{\mathrm{max}} + \sqrt{\frac{2d +2\log{\frac{1}{\delta}}}{2n+m}} \right)} + \sqrt{\frac{2\log{1/\delta}}{m}}\right).
    \end{align}
    For the setting described in the statement of the theorem, we know that $\theta^* = \mu_0$ and $\hat{\lambda}_{\mathrm{max}}\leq \|\mu_1\|_2^2+\sigma_1^2 + \sqrt{\frac{d}{n}}$. By setting 
    \begin{equation*}
        \gamma = \frac{e^{\frac{-\hat{\lambda}_{\mathrm{max}}}{4\sigma_0^2}}}{\sqrt{2\sigma_0\sqrt{2\pi}}}\left(\sqrt{\frac{2d}{m}\left(\alpha\hat{\lambda}_{\mathrm{max}} + \sqrt{\frac{2d+2\log{\frac{1}{\delta}}}{2n+m}}\right)} + \sqrt{2\frac{\log{1/\delta}}{m}}\right)^{-1/4},
    \end{equation*}
    and subsequent substitution in \eqref{ExcessRiskUpperBound}, one has
    \begin{align}
    \label{ExcessRiskUpperBound2}
        &R\left(\hat{\theta}^{\mathrm{RSS}}, P_0\right) 
        \leq~  R\left(\theta^*, P_0\right) 
        \nonumber\\
        &+~\mathcal{O}\left(\frac{e^{\frac{-\|\mu_0\|_2^2}{4\sigma_0^2}}}{\sqrt{2\sigma_0\sqrt{2\pi}}}\left(\left(\|\mu_1\|_2^2+\sigma_1^2\right)\frac{2d}{m}\alpha + \frac{2d}{m}\sqrt{\frac{2d+ \log{\frac{1}{\delta}}}{2n+m}}\right)^{1/4} 
        + \sqrt{\frac{2\log{1/\delta}}{m}}\right).
    \end{align}
\end{proof}
\begin{proof}[Proof of Corollary \ref{corollary}]
    Due to the result of Theorem \ref{NonRobustLossGeneralization}, to have an advantage over the ERM method we should have:
    \begin{equation}
        \left(\frac{2d\alpha}{m} + \frac{2d}{m}\sqrt{\frac{2d}{2n+m}}\right)^{\frac{1}{4}}\leq \sqrt{\frac{d}{m}}.
    \end{equation}
    If we have the following inequality it can be seen that the above inequality will be satisfied:
    \begin{equation}
        \alpha \leq \frac{d}{m}, ~ n \geq \frac{m^2}{d}.
    \end{equation}
    It can be seen that if we have $\alpha \leq \mathcal{O}\left(\frac{1}{d}\right)$ and $n>\Omega\left(d^3\right)$, then with high probability we have :
    \begin{equation}
        e_{P}\left(\hat{\theta}^{\mathrm{RSS}}\right) \leq \mathcal{O}\left(\frac{1}{m}\right)^{\frac{1}{4}} + \sqrt{\frac{\log{\frac{1}{\delta}}}{m}}.
    \end{equation}
    Therefore if we aim to have an excess risk less than $\epsilon$ we should have at least $\max\left\{\frac{1}{\epsilon^4}, \frac{\log{\frac{1}{\delta}}}{\epsilon^2}\right\}$ samples which is independent from the dimension of data.

\end{proof}
\begin{proof}[Proof of theorem \ref{GenralTheorem}]
From the manuscript we know that $\hat{\theta}^{\mathrm{RSS}}$ is the result of the following optimization problem:
\begin{align}
    \hat{\theta}^{\mathrm{RSS}}
\triangleq
\argmin_{\theta\in\Theta}~\left\{
\frac{1}{m}\sum_{i=1}^{m}
\phi_{\gamma}\left(\boldsymbol{X}_i,y_i;\theta\right)
+
\frac{\lambda}{n}\sum_{j=1}^{n}
\phi_{\gamma'}\left(\boldsymbol{X}'_j,
h_{\theta}\left(\boldsymbol{X}'_j\right)
;\theta\right)
\right\},
\end{align}
    where we have $\Theta \triangleq \left\{\theta \in \mathbb{R}^d, \|\theta\|_2 = 1\right\}$. Like the proof of Theorem \ref{robustLossgeneralization} if we consider the loss function mentioned in Theorem \ref{robustLossgeneralization}, along with the cost functions $c(\boldsymbol{X},\boldsymbol{X}^\prime) = \|\boldsymbol{X} - \boldsymbol{X}^\prime\|_2$ and $c^\prime(\boldsymbol{X},\boldsymbol{X}^\prime) = \|\boldsymbol{X} - \boldsymbol{X}^\prime\|_2^2$, the Slater conditions will be satisfied. Hence, we have strong duality, and therefore, there exists $s\ge0$ such that $\hat{\theta}^{\mathrm{RSS}}$ is also the result of the following optimization problem:
\begin{equation}
    \label{constraintRobustOptimization}
        \hat{\theta}^{\mathrm{RSS}} = \argmin_{\theta: \frac{1}{n}\sum_{j=1}^{n}
\phi_{\gamma'}\left(\boldsymbol{X}'_j,
h_{\theta}\left(\boldsymbol{X}'_j\right)
;\theta\right) \leq s}{\frac{1}{m}\sum_{i=1}^{m} \phi_{\gamma}\left(\boldsymbol{X}_i,y_i;\theta\right)}.
    \end{equation}
To prove the theorem, we need to establish an upper bound on the distance between the expected robust loss of $\hat{\theta}^{\mathrm{RSS}}$ and the best possible expected robust loss. To achieve this, we first provide a concentration bound for the maximum distance between the empirical and expected robust loss across all parameters with
    \begin{equation}
        \frac{1}{n}\sum_{j=1}^{n}
\phi_{\gamma'}\left(\boldsymbol{X}'_j,
h_{\theta}\left(\boldsymbol{X}'_j\right)
;\theta\right) \leq s,
        \label{constraintForRSSGeneral}
    \end{equation}
and then, we demonstrate that the parameter $\theta^{\star}$, which achieves the best possible robust loss, satisfies \eqref{constraintForRSSGeneral}. Suppose that the loss function is bounded with $b$,
    \begin{align*}
        &l\left(\boldsymbol{X},y; \theta\right)\leq b,~ \forall \theta, \boldsymbol{X}, y,
        \\
        &\phi_{\gamma}^c\left(\boldsymbol{X},y; \theta\right)\leq b,~ \forall c, \gamma, \theta, \boldsymbol{X}, y,
    \end{align*}
    where the second inequality results from the first one. Therefore, $\frac{1}{m}\sum_{i=1}^{m} \phi_{\gamma}\left(\boldsymbol{X}_i,y_i;\theta\right)$ has the bounded difference property with parameter ${b}/{m}$, which results that
\begin{align}
\Phi\left(P_0,S_0,\gamma,c\right)= \sup_{\theta \in \Tilde{\Theta}}{\left(\frac{1}{m}\sum_{i=1}^{m} \phi_{\gamma}\left(\boldsymbol{X}_i,y_i;\theta\right) - \mathbb{E}_{P_0}\left[\phi_{\gamma}^c\left(\boldsymbol{X},y;\theta\right)\right]\right)},
\label{PhiDefinitionGeneral}
\end{align}
also has the bounded difference property with parameter ${b}/{m}$, where $\Tilde{\Theta}$ is defined as
\begin{equation}
\label{LocalParameterSpaceGeneral}
\Tilde{\Theta} \triangleq \left\{\theta:\frac{1}{n}\sum_{j=1}^{n}
\phi_{\gamma'}\left(\boldsymbol{X}'_j,
h_{\theta}\left(\boldsymbol{X}'_j\right)
;\theta\right)\leq s, \|\theta\|_2 = 1\right\}.
\end{equation}
So, we have the following inequality with probability more than $1-\delta$:
\begin{equation}
\label{GeneralizationBound1GeneralCase}
\Phi\left(P_0,S_0,\gamma,c\right) \leq \mathbb{E}\left[\Phi\left(P_0,S_0,\gamma,c\right)\right] + \sqrt{\frac{\log{1/\delta}}{m}}, \quad \forall \theta \in \Tilde{\Theta}.
\end{equation}
In the above inequality we can give an upper-bound for the first term in the right hand side of the inequality as follows:
\begin{align}
\label{rademacherUpperboundEq1General}
\mathbb{E}\left[\Phi\left(P_0,S_0,\gamma,c\right)\right] 
= & \mathbb{E}_{S_0 \sim P_0^m}\left[\sup_{\theta \in \Tilde{\Theta}}{\left(\frac{1}{m}\sum_{i=1}^{m} \phi_{\gamma}\left(\boldsymbol{X}_i,y_i;\theta\right) - \mathbb{E}_{P_0}\left[\phi_{\gamma}^c\left(\theta;\boldsymbol{X},y\right)\right]\right)}\right]
\nonumber \\
\leq & \mathbb{E}_{S_0, S_0^\prime}\left[\sup_{\theta \in \Tilde{\Theta}}{\left(\frac{1}{m}\sum_{i=1}^{m} \phi_{\gamma}\left(\boldsymbol{X}_i,y_i;\theta\right) - \frac{1}{m}\sum_{i=1}^{m} \phi_{\gamma}\left(\boldsymbol{X}_i^\prime,y_i^\prime;\theta\right)\right)}\right]
\nonumber \\
\leq & 2\mathbb{E}_{S_0, \epsilon}\left[\sup_{\theta \in \Tilde{\Theta}}{\frac{1}{m}\sum_{\left(\boldsymbol{X}_i,y_i\right)\in S_0}{\epsilon_i\phi_{\gamma}^c\left(\theta;\boldsymbol{X}_i,y_i\right)}}\right].
\end{align}

Consider the loss function mentioned in the Theorem \ref{GenralTheorem}, and the following cost functions $c, c^\prime$
\begin{equation*}
    c(\boldsymbol{X},\boldsymbol{X}^\prime) = \|\boldsymbol{X} - \boldsymbol{X}^\prime\|_2, \quad c^\prime(\boldsymbol{X},\boldsymbol{X}^\prime) = \|\boldsymbol{X} - \boldsymbol{X}^\prime\|_2^2.
\end{equation*}
With these loss and cost functions we can rewrite $\phi_{\gamma}^{c}$ and $\phi_{\gamma^\prime}^{c^\prime}$ as follows:
\begin{align}
\phi_{\gamma}^{c}\left(\boldsymbol{X},y;\theta\right) =& \min\left\{1, \max\left\{0,1 - \gamma y\langle\theta,\boldsymbol{X}\rangle\right\}\right\},
\\
\phi_{\gamma^\prime}^{c^\prime}\left(\boldsymbol{X},h_{\theta}\left(\boldsymbol{X}\right);\theta\right) =& \max\left\{0,1 - \gamma^\prime \langle\theta,\boldsymbol{X}\rangle^2\right\}.
\end{align}
    As we can see $\phi_{\gamma}^{c}\left(\boldsymbol{X},y; \theta\right)$ is a $\gamma$-lipschitz function of $y\langle\theta,\boldsymbol{X}\rangle$, therefore due to the Talagrand’s lemma we can continue inequalities in \eqref{rademacherUpperboundEq1General} as follows:
\begin{align}
    \label{rademacherUpperboundEq4}
        \mathbb{E}\left[\Phi\left(P+0,S_0,\gamma,c\right)\right] 
        \leq & \mathbb{E}_{S_0, \epsilon}\left[\sup_{\theta \in \Tilde{\Theta}}{\frac{1}{m}\sum_{\left(\boldsymbol{X}_i,y_i\right)\in S_0}{\epsilon_i\phi_{\gamma}^c\left(\boldsymbol{X}_i,y_i; \theta\right)}}\right]
        \nonumber \\ 
        \leq & \gamma\mathbb{E}_{S_0, \epsilon}\left[\sup_{\theta \in \Tilde{\Theta}}{\frac{1}{m}\sum_{\left(\boldsymbol{X}_i,y_i\right)\in S_0}{\epsilon_i y_i\langle\theta,\boldsymbol{X}_i\rangle}}\right]
        \nonumber \\ 
        \leq & \gamma\mathbb{E}_{S_0, \epsilon}\left[\sup_{\theta \in \Tilde{\Theta}}{\langle\theta,\frac{1}{m}\sum_{\boldsymbol{X}_i\in S_0}{\epsilon_iy_i\boldsymbol{X}_i}\rangle}\right].
    \end{align}
    Now suppose that we weaken the constraint in the above inequalities as follows:
    \begin{align}
        s 
        \geq& \frac{1}{n}\sum_{j=1}^{n} \phi_{\gamma'}\left(\boldsymbol{X}'_j, h_{\theta}\left(\boldsymbol{X}'_j\right) ;\theta\right)
        \nonumber \\ 
        \geq& \mathbb{E}_{P_{1X,\theta}}\left[\phi_{\gamma^\prime}^{c^\prime}\left(\boldsymbol{X},y; \theta\right)\right] - \frac{4\gamma^\prime\left(\Tr\left(\Sigma_1\right) + \|\boldsymbol{\mu}_1\|_2^2\right)}{\sqrt{n}}  - \sqrt{\frac{\log{\frac{1}{\delta}}}{n}}
        \label{usingAuxiliaryLemma3} \\
        \geq& \frac{4e^{-\frac{\langle\theta, \boldsymbol{\mu}_1\rangle^2}{2\theta^T\Sigma_1\theta}}}{3\gamma^\prime} - \frac{4\gamma^\prime\left(\Tr\left(\Sigma_1\right) + \|\boldsymbol{\mu}_1\|_2^2\right)}{\sqrt{n}}  - \sqrt{\frac{\log{\frac{1}{\delta}}}{n}},
        \label{usingAuxiliaryLemma4}
    \end{align}
    where we use Auxiliary Lemma \ref{Alemma3} in the \eqref{usingAuxiliaryLemma3} and Auxiliary Lemma \ref{Alemma4} in the \eqref{usingAuxiliaryLemma4}.
    Suppose that we set $s$ as follows:
    \begin{equation}
        s = \inf_{\theta \in \Theta}{\frac{1}{n}\sum_{\boldsymbol{X}_i \in S_1^\prime}{\phi_{\gamma^\prime}^{c^\prime}\left(\boldsymbol{X}_i,h_{\theta}\left(\boldsymbol{X}_i\right); \theta\right)}} + \frac{12\gamma^\prime \Tr\left(\hat{\Sigma}\left(S_1^\prime\right)\right)}{\sqrt{n}}  + 3\sqrt{\frac{\log{\frac{1}{\delta}}}{n}} + 16\sqrt{\frac{d}{n}} + \alpha,
        \label{constraintUpperBoundGeneral}
    \end{equation}
    where in the above equation $S_1^\prime$ is generated in a same way but independent of $S_1$, and $\hat{\Sigma}\left(S_1^\prime\right)$ is the sample covariance matrix of $S_1$. Therefore we have the following upper-bound for $s$:
    \begin{align}
        s 
        =& \inf_{\theta \in \Theta}{\frac{1}{n}\sum_{\boldsymbol{X}_i \in S_1^\prime}{\phi_{\gamma^\prime}^{c^\prime}\left(\boldsymbol{X}_i,h_{\theta}\left(\boldsymbol{X}_i\right); \theta\right)}} + \frac{12\gamma^\prime \Tr\left(\hat{\Sigma}\left(S_1^\prime\right)\right)}{\sqrt{n}}  + 3\sqrt{\frac{\log{\frac{1}{\delta}}}{n}} + 16\sqrt{\frac{d}{n}} + \alpha
        \nonumber \\
        \leq& \inf_{\theta \in \Theta}{\mathbb{E}_{P_{1X,\theta}}\left[\phi_{\gamma^\prime}^{c^\prime}\left(\boldsymbol{X},h_{\theta}\left(\boldsymbol{X}_i\right); \theta\right)\right]} + \frac{24\gamma^\prime\left(\Tr\left(\Sigma_1\right) + \|\boldsymbol{\mu}_1\|_2^2\right)}{\sqrt{n}}  + 6\sqrt{\frac{\log{\frac{1}{\delta}}}{n}}+ 32\sqrt{\frac{d}{n}}+ \alpha
        \label{usingAuxiliaryLemma4_2} \\
        \leq& \inf_{\theta \in \Theta}{\frac{4e^{-\frac{\langle\theta, \boldsymbol{\mu}_1\rangle^2}{2\theta^T\Sigma_1\theta}}}{3\sqrt{\gamma^\prime}}} + \frac{24\gamma^\prime\left(\Tr\left(\Sigma_1\right) + \|\boldsymbol{\mu}_1\|_2^2\right)}{\sqrt{n}}  + 6\sqrt{\frac{\log{\frac{1}{\delta}}}{n}}+ 32\sqrt{\frac{d}{n}}+\alpha
        \nonumber \\
        \leq& \frac{4e^{-\frac{\boldsymbol{\mu}_1^T\Sigma_1^{-1}\boldsymbol{\mu}_1}{2}}}{3\sqrt{\gamma^\prime}} + \frac{24\gamma^\prime\left(\Tr\left(\Sigma_1\right) + \|\boldsymbol{\mu}_1\|_2^2\right)}{\sqrt{n}}  + 6\sqrt{\frac{\log{\frac{1}{\delta}}}{n}}+ 32\sqrt{\frac{d}{n}}+\alpha
        \label{mazximizerThetaForGeneralGaussian}
    \end{align}
    where in the \eqref{usingAuxiliaryLemma4_2} we use the result of Auxiliary Lemma \ref{Alemma4}. In the  \eqref{mazximizerThetaForGeneralGaussian} we insert $\theta = \Sigma_1^{-1}\boldsymbol{\mu}_1$ which minimizes $e^{-\frac{\langle\theta, \boldsymbol{\mu}_1\rangle^2}{2\theta^T\Sigma_1\theta}}$. We use \eqref{usingAuxiliaryLemma3} to \eqref{mazximizerThetaForGeneralGaussian}, to define $\Tilde{\Theta}_w$, which is a super-set for $\Tilde{\Theta}$, as follows:
    \begin{align}
        \Tilde{\Theta}_w 
        =& \left\{\theta \in \Theta: \frac{4e^{-\frac{\langle\theta, \boldsymbol{\mu}_1\rangle^2}{2\theta^T\Sigma_1\theta}}}{3\sqrt{\gamma^\prime}} \leq \frac{4e^{-\frac{\boldsymbol{\mu}_1^T\Sigma_1^{-1}\boldsymbol{\mu}_1}{2}}}{3\sqrt{\gamma^\prime}} + \frac{24\gamma^\prime\left(\Tr\left(\Sigma_1\right) + \|\boldsymbol{\mu}_1\|_2^2\right)}{\sqrt{n}}  + 6\sqrt{\frac{\log{\frac{1}{\delta}}}{n}} + 32\sqrt{\frac{d}{n}} + \alpha\right\}
        \nonumber \\
        =& \bigg\{\theta \in \Theta: \frac{\langle\theta,\boldsymbol{\mu}_1\rangle^2}{\theta^T\Sigma_1\theta} \geq \boldsymbol{\mu}_1\Sigma_1^{-1}\boldsymbol{\mu}_1  - 
        \nonumber \\ 
        & \frac{e^{\frac{\boldsymbol{\mu}_1\Sigma_1^{-1}\boldsymbol{\mu}_1}{2}}\left(18\gamma^{\prime2}\left(\Tr\left(\Sigma_1\right)+\|\boldsymbol{\mu}_1\|_2^2\right) + 5\gamma^\prime\sqrt{\log{\frac{1}{\delta}}} + 24\sqrt{d} + 4\alpha\sqrt{n\gamma^\prime}\right)}{2\sqrt{n}} \bigg\}
        \nonumber \\
        =& \bigg\{\theta \in \Theta: \boldsymbol{\mu}_1\Sigma_1^{-1}\boldsymbol{\mu}_1 - \frac{\langle\theta,\boldsymbol{\mu}_1\rangle^2}{\theta^T\Sigma_1\theta} \leq   
        \nonumber \\ 
        & \frac{e^{\frac{\boldsymbol{\mu}_1\Sigma_1^{-1}\boldsymbol{\mu}_1}{2}}\left(18\gamma^{\prime2}\left(\Tr\left(\Sigma_1\right)+\|\boldsymbol{\mu}_1\|_2^2\right) + 5\gamma^\prime\sqrt{\log{\frac{1}{\delta}}} + 24\sqrt{d} + 4\alpha\sqrt{n\gamma^\prime}\right)}{2\sqrt{n}}\bigg\}.
    \end{align}
    
    We can write $\theta = \frac{\Sigma_1^{-1}\boldsymbol{\mu}_1 + \epsilon \boldsymbol{v}}{\|\Sigma_1^{-1}\boldsymbol{\mu}_1 + \epsilon \boldsymbol{v}\|_2}$, where $v$ is a vector which its Euclidean norm is equal to $1$ and belongs to the hyperplane orthogonal to $\Sigma_1^{-1}\boldsymbol{\mu}_1$, and $\epsilon \in \mathbb{R}$. If we use this new notation we can rewrite $\Tilde{\Theta}_w$ as follows:
\begin{align}
    \Tilde{\Theta}_w 
        =& \bigg\{\theta = \frac{\Sigma_1^{-1}\boldsymbol{\mu}_1 + \epsilon \boldsymbol{v}}{\|\Sigma_1^{-1}\boldsymbol{\mu}_1 + \epsilon \boldsymbol{v}\|_2}: 
        \epsilon^2\boldsymbol{v}^T\left(\Sigma_1 - \frac{\boldsymbol{\mu}_1\boldsymbol{\mu}_1^T}{\boldsymbol{\mu}_1^T\Sigma_1^{-1}\boldsymbol{\mu}_1}\right)\boldsymbol{v} \leq t,~
        \boldsymbol{v}^T\Sigma_1^{-1}\boldsymbol{\mu}_1 = 0, \|\boldsymbol{v}\|_2 = 1
        \bigg\},
        \\
        t =& \frac{e^{\frac{\boldsymbol{\mu}_1\Sigma_1^{-1}\boldsymbol{\mu}_1}{2}}\left(18\gamma^{\prime2}\left(\Tr\left(\Sigma_1\right)+\|\boldsymbol{\mu}_1\|_2^2\right) + 5\gamma^\prime\sqrt{\log{\frac{1}{\delta}}} + 24\sqrt{d}+ 4\alpha\sqrt{n\gamma^\prime}\right)}{2\sqrt{n}}
\end{align}
    where due to Auxiliary Lemma \ref{Alemma7} we have:
    \begin{equation}
        \boldsymbol{v}^T\left(\Sigma_1 - \frac{\boldsymbol{\mu}_1\boldsymbol{\mu}_1^T}{\boldsymbol{\mu}_1^T\Sigma_1^{-1}\boldsymbol{\mu}_1}\right)\boldsymbol{v} \geq \frac{\Delta\left(\Sigma_1\right)}{d\kappa_1\kappa_1^\prime},
    \end{equation}
    where 
    \begin{align}
\kappa_1 =& \frac{\lambda_{\mathrm{max}}\left(\Sigma_1\right)}{\lambda_{\mathrm{min}}\left(\Sigma_1\right)}
,~
\kappa_1^{\prime} = \frac{\lambda_{\mathrm{max}}\left(\Sigma_1\right)}{\Delta\left(\Sigma_1\right)}
\nonumber \\ 
\Delta\left(\Sigma_1\right) =& \min\left\{\lambda_i\left(\Sigma_1\right) - \lambda_j\left(\Sigma_1\right)\right\},~ \forall i,j :\lambda_i\left(\Sigma_1\right) \neq \lambda_j\left(\Sigma_1\right).
\nonumber
\end{align}
Therefore we can weaken $\Tilde{\Theta}_w$ as follows:
\begin{align}
\label{boundingOptions}
    \Tilde{\Theta}_w 
        =& \bigg\{\theta = \frac{\Sigma_1^{-1}\boldsymbol{\mu}_1 + \epsilon \boldsymbol{v}}{\|\Sigma_1^{-1}\boldsymbol{\mu}_1 + \epsilon \boldsymbol{v}\|_2}: \epsilon \leq t^\prime,~ \boldsymbol{v}^T\Sigma_1^{-1}\boldsymbol{\mu}_1 = 0, \|\boldsymbol{v}\|_2 = 1
        \bigg\},
        \\
        \label{tprimeDefinition}
        t^\prime =& \sqrt{\frac{d\kappa_1\kappa_1^\prime}{\Delta\left(\Sigma_1\right)}\frac{e^{\frac{\boldsymbol{\mu}_1\Sigma_1^{-1}\boldsymbol{\mu}_1}{2}}\left(18\gamma^{\prime2}\left(\Tr\left(\Sigma_1\right)+\|\boldsymbol{\mu}_1\|_2^2\right) + 5\gamma^\prime\sqrt{\log{\frac{1}{\delta}}} + 24\sqrt{d}+ 4\alpha\sqrt{n\gamma^\prime}\right)}{2\sqrt{n}}}.
\end{align}
    Due to the definition of $\Tilde{\Theta}_w$ in \ref{boundingOptions} and the fact that it is a super-set for $\Tilde{\Theta}$ we can continue \eqref{rademacherUpperboundEq4} as follows:
    \begin{align}
    \label{rademacherUpperboundEq5}
        \mathbb{E}\left[\Phi\left(P+0,S_0,\gamma,c\right)\right]  
        \leq & \gamma\mathbb{E}_{S_0, \epsilon}\left[\sup_{\theta \in \Tilde{\Theta}}{\langle\theta,\frac{1}{m}\sum_{\boldsymbol{X}_i\in S_0}{\epsilon_iy_i\boldsymbol{X}_i}\rangle}\right]
        \nonumber \\
        \leq & \gamma\mathbb{E}_{S_0, \epsilon}\left[\sup_{\theta \in \Tilde{\Theta}_w}{\langle\theta,\frac{1}{m}\sum_{\boldsymbol{X}_i\in S_0}{\epsilon_iy_i\boldsymbol{X}_i}\rangle}\right]
        \nonumber \\
        \leq & \gamma t^\prime\sqrt{\frac{\lambda_{\mathrm{max}}\left(\Sigma_1\right)\mathbb{E}\left[\|\boldsymbol{X}\|_2^2\right]}{m\|\boldsymbol{\mu}_1\|_2}}
        \nonumber \\
        \leq & \gamma t^\prime\sqrt{\frac{\left(\|\boldsymbol{\mu}_1\|_2^2 + \Tr\left(\Sigma_1\right)\right)\lambda_{\mathrm{max}}\left(\Sigma_1\right)}{m\|\boldsymbol{\mu}_1\|_2}},
    \end{align}
    Using Inequalities \eqref{rademacherUpperboundEq5} and \eqref{GeneralizationBound1GeneralCase} we have:
    \begin{align}
        \bigg\vert
        \frac{1}{m}\sum_{i=1}^{m}
        \phi_{\gamma}\left(\boldsymbol{X}_i,y_i;\theta\right)- \mathbb{E}_{P_0}\left[\phi_{\gamma}^c\left(\boldsymbol{X},y\right); \theta\right]\bigg\vert \leq \gamma t^\prime\sqrt{\frac{\left(\|\boldsymbol{\mu}_1\|_2^2 + \Tr\left(\Sigma_1\right)\right)\lambda_{\mathrm{max}}\left(\Sigma_1\right)}{m\|\boldsymbol{\mu}_1\|_2}} + \sqrt{\frac{\log{\frac{1}{\delta}}}{m}}.
    \end{align}
    Suppose that we define $\theta^*$ as follows:
    \begin{equation}
        \theta^{*} = \argmin_{\theta \in \Theta}{\mathbb{E}_{P_0}\left[l\left(\boldsymbol{X},y; \theta\right)\right]}.
    \end{equation}
    For the loss function, cost functions and distributions considered in the Theorem \ref{GenralTheorem} we have:
    \begin{align}
        \theta^* 
        =& \argmin_{\theta \in \Theta}{\mathbb{P}_{u \sim \mathcal{N}\left(\langle\theta,\boldsymbol{\mu}_0\rangle, \theta^T\Sigma_0\theta\right)}\left(u \leq 0\right)}
        \nonumber \\ 
        =& \argmin_{\theta \in \Theta}{\mathcal{Q}\left(\frac{\langle\theta,\boldsymbol{\mu}_0\rangle^2}{\theta^T\Sigma_0\theta}\right)}
        \nonumber \\ 
        =& \frac{\Sigma_0^{-1}\boldsymbol{\mu}_0}{\|\Sigma_0^{-1}\boldsymbol{\mu}_0\|_2}.
    \end{align}
    In the setting of the theorem,  we also have:
    \begin{align*}
    \|\boldsymbol{\mu}_0 - \boldsymbol{\mu}_1\|_2 \leq \alpha, \|\Sigma_1-\Sigma_0\|_2\leq \alpha
    \end{align*}
    Due to the Lemma \ref{Alemma6} we know that if we set $s$ and $\gamma^\prime$ as 
    \begin{align}
    s = &\inf_{\theta \in \Theta}{\frac{1}{n}\sum_{\boldsymbol{X}_i \in S_1^\prime}{\phi_{\gamma^\prime}^{c^\prime}\left(\boldsymbol{X}_i,h_{\theta}\left(\boldsymbol{X}_i\right); \theta\right)}} + \frac{12\gamma^\prime \Tr\left(\hat{\Sigma}\left(S_1^\prime\right)\right)}{\sqrt{n}}  + 2\sqrt{\frac{\log{\frac{1}{\delta}}}{n}} + 16\sqrt{\frac{d}{n}} + \alpha
    \nonumber \\
    \gamma^\prime = & 2e^{-\frac{\beta}{2}\sqrt{\hat{\lambda}_{\mathrm{max}}\left(\hat{\Sigma}\left(S^\prime_1\right)\right)}}
    \end{align}
    then with high probability  $\theta^* \in \Tilde{\Theta}$, and we have the following:
    \begin{align}
         \mathbb{E}_{P_0}\left[\phi_{\gamma}^c\left(\boldsymbol{X},y; \hat{\theta}^{\mathrm{RSS}}\right)\right] \leq  \mathbb{E}_{P_0}\left[\phi_{\gamma}^c\left(\boldsymbol{X},y; \theta^*\right)\right] + \gamma t^\prime\sqrt{\frac{\left(\|\boldsymbol{\mu}_1\|_2^2 + \Tr\left(\Sigma_1\right)\right)\lambda_{\mathrm{max}}\left(\Sigma_1\right)}{m\|\boldsymbol{\mu}_1\|_2}} + \sqrt{\frac{\log{\frac{1}{\delta}}}{m}}.
    \end{align}
    Using the result of Auxiliary Lemma \ref{Alemma5} and the above inequality we have:
    \begin{align}
         \mathbb{E}_{\left(\boldsymbol{X},y\right) \sim P_0}\left[l\left(\boldsymbol{X}, y; \hat{\theta}^{\mathrm{RSS}}\right)\right] 
         \leq&  \mathbb{E}_{\left(\boldsymbol{X},y\right) \sim P_0}\left[l\left(\boldsymbol{X}, y; \theta^*\right)\right] + \frac{e^{-\frac{\langle\theta^*,\boldsymbol{\mu}_0\rangle^2}{2\theta^{*T}\Sigma_0\theta^*}}}{2\gamma}
         \nonumber \\
         +& \gamma t^\prime\sqrt{\frac{\left(\|\boldsymbol{\mu}_1\|_2^2 + \Tr\left(\Sigma_1\right)\right)\lambda_{\mathrm{max}}\left(\Sigma_1\right)}{m\|\boldsymbol{\mu}_1\|_2}} + \sqrt{\frac{\log{\frac{1}{\delta}}}{m}}.
    \end{align}
    Therefore if we set $\gamma$ as follows:
    \begin{equation}
        \gamma = \sqrt{\frac{\sqrt{m}}{2t^\prime\lambda_{\mathrm{max}}\left(\hat{\Sigma_1}\left(S_1^\prime\right)\right)}},
    \end{equation}
    then we have:
    \begin{align}
         \mathbb{E}_{\left(\boldsymbol{X},y\right) \sim P_0}\left[l\left(\boldsymbol{X}, y; \hat{\theta}^{\mathrm{RSS}}\right)\right] 
         \leq&  \mathbb{E}_{\left(\boldsymbol{X},y\right) \sim P_0}\left[l\left(\boldsymbol{X}, y; \theta^*\right)\right] 
         \nonumber \\
         +& \sqrt{t^\prime\frac{\lambda_{\mathrm{max}}\left(\Sigma_1\right)}{\sqrt{m}\|\boldsymbol{\mu}_1\|_2}} + \sqrt{\frac{\log{\frac{1}{\delta}}}{m}}.
    \end{align}
    Now, if we split our labeled samples into two equal parts, each containing $m/2$ samples, and combine the unlabeled set with the second part, we have $m/2$ labeled samples and $n+m/2$ unlabeled samples. When we set $t^\prime$ from \eqref{tprimeDefinition} in this new scenario, the proof will be complete.
\end{proof}
\section{Auxiliary Lemmas}
\begin{Alemma}
\label{Alemma1}
Consider the setting described in Theorem \ref{robustLossgeneralization}. let us define $f_{\Theta, P}\left(\theta, \gamma\right)$ as
\begin{equation}
f_{\Theta, P}\left(\theta,\gamma\right) = \mathbb{E}_{\left(\boldsymbol{X},y\right) \sim P}\left[\phi_{\gamma}^c\left(\boldsymbol{X}, y; \theta\right)\right] - \mathbb{E}_{\left(\boldsymbol{X},y\right) \sim P}\left[\ell\left(\boldsymbol{X}, y; \theta\right)\right].
\end{equation}
Then, for the loss and cost functions described in Theorem \ref{robustLossgeneralization}, $P = \mathcal{N}_{y\boldsymbol{\mu}, \sigma}\left(\boldsymbol{X}, y\right)$, and the class of linear classifiers with $|\theta|_2=1$, the function $f_{\Theta, P}\left(\theta, \gamma\right)$ satisfies the following inequality:
    \begin{equation}
        f_{\Theta, P}\left(\theta,\gamma\right) \leq \frac{e^{\frac{-\langle\theta,\mu\rangle^2}{2\sigma^2}}}{2\gamma\sigma\sqrt{2\pi}}.
    \end{equation}
\end{Alemma}
\begin{proof}
    For the setting described in the lemma we have:
    \begin{equation}
        u = y\langle\theta, \boldsymbol{X}\rangle \sim \mathcal{N}\left(\langle\theta,\mu\rangle,\sigma^2\right).
    \end{equation}
    Therefore we can calculate an upper-bound for $f_{\Theta, P}\left(\theta,\gamma\right)$ as follows:
    \begin{align}
        \mathbb{E}_{\left(\boldsymbol{X},y\right) \sim P}\left[\phi_{\gamma}^c\left(\boldsymbol{X}, y; \theta\right)\right] - \mathbb{E}_{\left(\boldsymbol{X},y\right) \sim P}\left[l\left(\boldsymbol{X}, y;\theta\right)\right] 
        & = \int_{u = 0}^{1/\gamma}{\frac{1}{\sqrt{2\pi\sigma^2}}\left(1-\gamma u\right)e^{-\frac{\left(u - \langle\theta,\mu\rangle\right)^2}{2\sigma^2}}\,du}
        \nonumber \\ 
        & \leq \int_{u = 0}^{1/\gamma}{\frac{1}{\sqrt{2\pi\sigma^2}}\left(1-\gamma u\right)e^{-\frac{\langle\theta,\mu\rangle^2}{2\sigma^2}}\,du}
        \nonumber \\
        & \leq \frac{e^{-\frac{\langle\theta,\mu\rangle^2}{2\sigma^2}}}{2\gamma\sigma\sqrt{2\pi}}.
    \end{align}
    And the proof is complete.
\end{proof}
\begin{Alemma}
\label{Alemma2}
    Consider the setting in Theorem \ref{robustLossgeneralization} if we define $\theta^*$ as follows:
    \begin{equation}
        \theta^* \triangleq \frac{\boldsymbol{\mu}_0}{\|\boldsymbol{\mu}_0\|_2},
    \end{equation}
    then with high probability we have:
    \begin{equation*}
        \frac{1}{n}\sum_{j=1}^{n}
\phi_{\gamma'}\left(\boldsymbol{X}'_j,
h_{\theta}\left(\boldsymbol{X}'_j\right)
;\theta\right) \leq s,
    \end{equation*}
    As long as we set:
    \begin{equation}
        \gamma^\prime = \frac{1}{\hat{\lambda}_{\mathrm{max}}\log{n} + \mathcal{O}(\frac{d}{n})}, ~ s = 1-\gamma^\prime\left(\hat{\lambda}_{\mathrm{max}}\left(1-\alpha\right) - 3\mathcal{O}\left(\sqrt{\frac{d}{n}}\right)\right)
    \end{equation}
\end{Alemma}
\begin{proof}
    Due to the boundedness of $\phi_{\gamma^\prime}^{c^\prime}$ with probability more than $1-\delta$ we have the following inequality:
    \begin{align}
        \label{upperBoundForOptimialSolutions}
        \frac{1}{n}\sum_{\boldsymbol{X}_i^\prime \in S_1}{\phi_{\gamma^\prime}^{c^\prime}\left(\boldsymbol{X}_i^\prime,h_{\theta^*}\left(\boldsymbol{X}_i^\prime\right); \theta^*\right)} \leq& \mathbb{E}_{\boldsymbol{X}\sim P_{1X}}\left[\phi_{\gamma^\prime}^{c^\prime}\left(\boldsymbol{X},h_{\theta^*}\left(\boldsymbol{X}\right); \theta^*\right)\right] + \sqrt{\frac{\log{\frac{1}{\delta}}}{n}}
        \\
        \label{optimalSolutionsExpected}
        = & \mathbb{E}_{\boldsymbol{X}\sim P_{1X}}\left[\max\left\{0,1 - \gamma^\prime \langle\theta^*,\boldsymbol{X}\rangle^2\right\}\right] + \sqrt{\frac{\log{\frac{1}{\delta}}}{n}},
    \end{align}
    where we have the following upper-bound for the first term in \ref{optimalSolutionsExpected}:
    \begin{align}
    \label{upperBoundForOptimialSolutionsExpected}
        \mathbb{E}_{\boldsymbol{X}\sim P_{1X}}\left[\max\left\{0,1 - \gamma^\prime \langle\theta^*,\boldsymbol{X}\rangle^2\right\}\right] 
        = & \mathbb{E}_{\boldsymbol{X}\sim P_{1X}}\left[1 - \gamma^\prime \langle\theta^*,\boldsymbol{X}\rangle^2\right] 
        \nonumber \\
        &- \int_{\langle\theta^*,\boldsymbol{X}\rangle^2 \geq \frac{1}{\gamma^\prime}}{1 - \gamma^\prime \langle\theta^*,\boldsymbol{X}\rangle^2 ~dP_{1X}}
        \nonumber \\
        = & \mathbb{E}_{\boldsymbol{X}\sim P_{1X}}\left[1 - \gamma^\prime \langle\theta^*,\boldsymbol{X}\rangle^2\right] 
        \nonumber \\
        &- \frac{1}{2}\int_{\langle\theta^*,\boldsymbol{X}\rangle^2 \geq \frac{1}{\gamma^\prime}}{1 - \gamma^\prime \langle\theta^*,\boldsymbol{X}\rangle^2 ~\mathcal{N}\left(\boldsymbol{\mu_1}, \sigma_1^2I\right)} 
        \nonumber \\
        &- \frac{1}{2}\int_{\langle\theta^*,\boldsymbol{X}\rangle^2 \geq \frac{1}{\gamma^\prime}}{1 - \gamma^\prime \langle\theta^*,\boldsymbol{X}\rangle^2 ~\mathcal{N}\left(-\boldsymbol{\mu_1}, \sigma_1^2I\right)}
        \nonumber \\
        \leq & \mathbb{E}_{\boldsymbol{X}\sim P_{1X}}\left[1 - \gamma^\prime \langle\theta^*,\boldsymbol{X}\rangle^2\right] 
        \nonumber \\
        &- \int_{u \geq \frac{1}{\sigma_1\sqrt{\gamma^\prime}} - \frac{\langle\theta^*,\boldsymbol{\mu}_1\rangle}{\sigma_1}}{1 - \gamma^\prime \left(\langle \theta^* , \boldsymbol{\mu}_1\rangle + \sigma_1u\right)^2 ~\mathcal{N}\left(0, \sigma_1^2\right)} 
        \nonumber \\
        &- \int_{u \leq -\frac{1}{\sigma_1\sqrt{\gamma^\prime}} + \frac{\langle\theta^*,\boldsymbol{\mu}_1\rangle}{\sigma_1}}{1 - \gamma^\prime \left(-\langle \theta^* , \boldsymbol{\mu}_1\rangle + \sigma_1u\right)^2 ~\mathcal{N}\left(0, \sigma_1^2\right)}.
    \end{align}
    Now if we set
    \begin{equation}
        \gamma^\prime \leq \frac{1}{\left(\langle \theta^*,\boldsymbol{\mu}_1\rangle + \sigma_1\sqrt{\log{n}}\rangle\right)^2}
    \label{gammaPrimeLowerBound}
    \end{equation}
    we can continue inequalities in \ref{upperBoundForOptimialSolutionsExpected} as follows:
    \begin{align}
    \label{upperBoundForOptimialSolutionsExpected2}
        \mathbb{E}_{\boldsymbol{X}\sim P_{1X}}\left[\max\left\{0,1 - \gamma^\prime \langle\theta^*,\boldsymbol{X}\rangle^2\right\}\right] 
        \leq & \mathbb{E}_{\boldsymbol{X}\sim P_{1X}}\left[1 - \gamma^\prime \langle\theta^*,\boldsymbol{X}\rangle^2\right] + \frac{1}{\sqrt{n}}
        \nonumber \\
        = & 1 - \gamma^\prime\left(\langle\theta^*, \boldsymbol{\mu}_1\rangle^2 + \sigma_1^2\right) + \frac{1}{\sqrt{n}}.
    \end{align}
   However, since we lack access to $\mu_1$, $\sigma_1$, or $\theta^*$, we cannot directly obtain $\gamma^\prime$ from \eqref{gammaPrimeLowerBound}. Nevertheless, a suitable choice for $\gamma^\prime$ that satisfies \eqref{gammaPrimeLowerBound} and can be determined from samples is:
    \begin{equation}
        \gamma^\prime = \frac{1}{\hat{\lambda}_{\mathrm{max}}\log{n} + \mathcal{O}(\frac{d}{n})},
    \end{equation}
    where $\hat{\lambda}_{\mathrm{max}}$ is the maximum eigenvalue of the sample covariance matrix,$\hat{\Sigma}$, of a set of $n$ unlabeled samples.
    Now from Equations \ref{upperBoundForOptimialSolutions} and \ref{upperBoundForOptimialSolutionsExpected2} we have:
    \begin{equation}
        \frac{1}{n}\sum_{j=1}^{n}
\phi_{\gamma'}\left(\boldsymbol{X}'_j,
h_{\theta}\left(\boldsymbol{X}'_j\right)
;\theta\right) \leq 1 - \gamma^\prime\left(\langle\theta^*, \boldsymbol{\mu}_1\rangle^2 + \sigma_1^2\right) + 2\sqrt{\frac{\log{\frac{1}{\delta}}}{n}}.
    \end{equation}
To complete the proof we should show that :
\begin{equation}
    1 - \gamma^\prime\left(\langle\theta^*, \boldsymbol{\mu}_1\rangle^2 + \sigma_1^2\right) + 2\sqrt{\frac{\log{\frac{1}{\delta}}}{n}} \leq s,
\end{equation}
where due to the setting of this lemma we only need to set:
\begin{equation}
    s = 1-\gamma^\prime\left(\hat{\lambda}_{\mathrm{max}}\left(1-\alpha\right) - 3\mathcal{O}\left(\sqrt{\frac{d}{n}}\right)\right).
\end{equation}
\end{proof}
\begin{Alemma}
\label{Alemma3}
    Consider the setting described in Theorem \ref{GenralTheorem}, if we define the function $\phi_{\gamma^\prime}^{c^\prime}$ as follows
    \begin{equation}
\phi_{\gamma^\prime}^{c^\prime}\left(\boldsymbol{X},h_{\theta}\left(\boldsymbol{X}\right); \theta\right) = \max\left\{0,1 - \gamma^\prime \langle\theta,\boldsymbol{X}\rangle^2\right\},
    \end{equation}
    then with probability greater than $1-\delta$, for $\forall \theta \in \Theta$, with $\|\theta\|_2 = 1$ we have:
    \begin{equation}
    \bigg\vert\frac{1}{n}\sum_{i=1}^n{\phi_{\gamma^\prime}^{c^\prime}\left(\boldsymbol{X}_i,h_{\theta}\left(\boldsymbol{X}_i\right);\theta\right)} - \mathbb{E}\left[\phi_{\gamma^\prime}^{c^\prime}\left(\boldsymbol{X},h_{\theta}\left(\boldsymbol{X}\right); \theta\right)\right]\bigg\vert \leq \frac{4\gamma^\prime\left(\Tr\left(\Sigma_1\right) + \|\boldsymbol{\mu}_1\|_2^2\right)}{\sqrt{n}}  + \sqrt{\frac{\log{\frac{1}{\delta}}}{n}}.
    \end{equation}
\end{Alemma}
\begin{proof}
    From the definition of $\phi_{\gamma^\prime}^{c^\prime}$ we know that this function is bounded between $0$ and $1$, therefore the following function has the bounded difference property with parameter $\frac{1}{n}$
    \begin{equation}
        \Phi = \sup_{\theta \in \Theta}{\bigg\vert\frac{1}{n}\sum_{i=1}^n{\phi_{\gamma^\prime}^{c^\prime}\left(\boldsymbol{X}_i,h_{\theta}\left(\boldsymbol{X}_i\right); \theta\right)} - \mathbb{E}\left[\phi_{\gamma^\prime}^{c^\prime}\left(\boldsymbol{X},h_{\theta}\left(\boldsymbol{X}\right);\theta\right)\right]\bigg\vert}.
    \end{equation}
    So we can write the McDiarmid's inequality for this function as follows:
    \begin{equation}
        \Phi \leq \mathbb{E}\left[\Phi\right] + \sqrt{\frac{\log{\frac{1}{\delta}}}{n}}.
    \end{equation}
    Now we try to give an upper-bound for the first term in the right-hand side of the above inequality:
    \begin{align}
        \mathbb{E}\left[\Phi\right] 
        =& \mathbb{E}_{\left\{\boldsymbol{X}_i\right\}_1^n}\left[\sup_{\theta \in \Theta}{\bigg\vert\frac{1}{n}\sum_{i=1}^n{\phi_{\gamma^\prime}^{c^\prime}\left(\boldsymbol{X}_i,h_{\theta}\left(\boldsymbol{X}_i\right);\theta\right)} - \mathbb{E}\left[\phi_{\gamma^\prime}^{c^\prime}\left(\boldsymbol{X},h_{\theta}\left(\boldsymbol{X}\right);\theta\right)\right]\bigg\vert}\right]
        \nonumber \\
        \leq& \mathbb{E}_{\left\{\boldsymbol{X}_i\right\}_1^n, \left\{\boldsymbol{X}_i^\prime\right\}_1^n}\left[\sup_{\theta \in \Theta}{\bigg\vert\frac{1}{n}\sum_{i=1}^n{\phi_{\gamma^\prime}^{c^\prime}\left(\boldsymbol{X}_i,h_{\theta}\left(\boldsymbol{X}_i\right);\theta\right)} - \frac{1}{n}\sum_{i=1}^n{\phi_{\gamma^\prime}^{c^\prime}\left(\boldsymbol{X}^{\prime}_i,h_{\theta}\left(\boldsymbol{X}^{\prime}_i\right);\theta\right)}\bigg\vert}\right]
        \nonumber \\
        \leq& \mathbb{E}_{\left\{\boldsymbol{X}_i\right\}_1^{n}, \left\{\boldsymbol{X}^\prime_i\right\}_1^n,\left\{\epsilon_i\right\}_1^n}\left[\sup_{\theta \in \Theta}{\epsilon_i\left(\frac{1}{n}\sum_{i=1}^n{\phi_{\gamma^\prime}^{c^\prime}\left(\boldsymbol{X}_i,h_{\theta}\left(\boldsymbol{X}_i\right); \theta\right)} - \frac{1}{n}\sum_{i=1}^n{\phi_{\gamma^\prime}^{c^\prime}\left(\boldsymbol{X}^{\prime}_i,h_{\theta}\left(\boldsymbol{X}^{\prime}_i\right); \theta\right)}\right)}\right]
        \label{rademacherRandomVar}\\ 
        \leq& 2\mathbb{E}_{\left\{\boldsymbol{X}_i\right\}_1^{n},\left\{\epsilon_i\right\}_1^n}\left[\sup_{\theta \in \Theta}{\frac{1}{n}\sum_{i=1}^n{\epsilon_i\phi_{\gamma^\prime}^{c^\prime}\left(\boldsymbol{X}_i,h_{\theta}\left(\boldsymbol{X}_i\right); \theta\right)}}\right]
        \nonumber \\
        \leq& 2\gamma^\prime\mathbb{E}_{\left\{\boldsymbol{X}_i\right\}_1^{n},\left\{\epsilon_i\right\}_1^n}\left[\sup_{\theta \in \Theta}{\frac{1}{n}\sum_{i=1}^n{\epsilon_i\langle\theta,\boldsymbol{X}_i\rangle^2}}\right]
        \nonumber \\
        =& 2\gamma^\prime\mathbb{E}_{\left\{\boldsymbol{X}_i\right\}_1^{n},\left\{\epsilon_i\right\}_1^n}\left[\sup_{\theta \in \Theta}{\theta^T\left(\frac{1}{n}\sum_{i=1}^n{\epsilon_i\boldsymbol{X}_i\boldsymbol{X}_i^T}\right)\theta}\right]
        \nonumber \\
        \leq & 2\gamma^\prime \mathbb{E}_{\left\{\boldsymbol{X}_i\right\}_1^{n},\left\{\epsilon_i\right\}_1^n}\left[\lambda_{\max}\left(\frac{1}{n}\sum_{i=1}^n{\epsilon_i\boldsymbol{X}_i\boldsymbol{X}_i^T}\right)\right]
        \nonumber \\
        \leq & 2\gamma^\prime \sqrt{\mathbb{E}_{\left\{\boldsymbol{X}_i\right\}_1^{n},\left\{\epsilon_i\right\}_1^n}\left[\Tr\left(\frac{1}{n}\sum_{i=1}^n{\epsilon_i\boldsymbol{X}_i\boldsymbol{X}_i^T}\right)^T\left(\frac{1}{n}\sum_{i=1}^n{\epsilon_i\boldsymbol{X}_i\boldsymbol{X}_i^T}\right)\right]}
        \nonumber \\
        \leq & 2\gamma^\prime \sqrt{\mathbb{E}_{\left\{\boldsymbol{X}_i\right\}_1^{n}}\left[\frac{1}{n^2}\sum_{i=1}^n{\|\boldsymbol{X}_i\|_2^4}\right]}
        \nonumber \\
        \leq & \frac{2\gamma^\prime}{\sqrt{n}} \sqrt{\mathbb{E}_{\boldsymbol{X}}\left[\|\boldsymbol{X}\|_2^4\right]}
        \nonumber \\
        \leq & \frac{4\gamma^\prime\left(\Tr\left(\Sigma_1\right) + \|\boldsymbol{\mu}_1\|_2^2\right)}{\sqrt{n}},
        \label{upperForRademacherComplexity}
    \end{align}
    where $\epsilon_i$s in \eqref{rademacherRandomVar} are Rademacher's random variables, and the last inequality in \eqref{upperForRademacherComplexity} is results from the fact that $\boldsymbol{X}_i$s are sampled from distribution $P_1$ defined in Theorem \ref{GenralTheorem} of the manuscript. And the prood is complete.
\end{proof}
\begin{Alemma}
\label{Alemma4}
    Consider the setting described in Theorem \ref{GenralTheorem}, if we define the function $\phi_{\gamma^\prime}^{c^\prime}$ as follows
    \begin{equation}
\phi_{\gamma^\prime}^{c^\prime}\left(\boldsymbol{X},h_{\theta}\left(\boldsymbol{X}\right); \theta\right) = \max\left\{0,1 - \gamma^\prime \langle\theta,\boldsymbol{X}\rangle^2\right\},
    \end{equation}
    then we have the following upper-bound and lower-bound for the expected value of $\phi_{\gamma^\prime}^{c^\prime}$ when $\boldsymbol{X}$ sampled from the distribution $P_1$ defined in Theorem \ref{GenralTheorem} of the manuscript:
    \begin{equation}
        \frac{4e^{-\frac{\langle\theta, \boldsymbol{\mu}_1\rangle^2}{2\theta^T\Sigma_1\theta}}}{3\sqrt{\gamma^\prime}}\left(1-\frac{1}{\sqrt{\gamma^\prime\theta^T\Sigma_1\theta}}\right) \leq \mathbb{E}\left[\phi_{\gamma^\prime}^{c^\prime}\left(\boldsymbol{X},h_{\theta}\left(\boldsymbol{X}\right); \theta\right)\right] \leq \frac{4e^{-\frac{\langle\theta, \boldsymbol{\mu}_1\rangle^2}{2\theta^T\Sigma_1\theta}}}{3\sqrt{\gamma^\prime}}\left(1+\frac{1}{\sqrt{\gamma^\prime\theta^T\Sigma_1\theta}}\right).
    \end{equation}
\end{Alemma}
\begin{proof}
    For the expected value of $\phi_{\gamma^\prime}^{c^\prime}$ we can write:
    \begin{align}
        \mathbb{E}\left[\phi_{\gamma^\prime}^{c^\prime}\left(\boldsymbol{X},h_{\theta}\left(\boldsymbol{X}\right);\theta\right)\right] 
        =& \max\left\{0,1 - \gamma^\prime \langle\theta,\boldsymbol{X}\rangle^2\right\} 
        \nonumber \\
         = & \int_{-\frac{1}{\sqrt{\gamma^\prime}} \leq \langle\theta,\boldsymbol{X}\rangle \leq \frac{1}{\sqrt{\gamma^\prime}}}{1- \gamma^\prime\langle\theta,\boldsymbol{X}\rangle^2\mathcal{N}\left(\boldsymbol{\mu}_1,\Sigma_1\right)}
         \nonumber \\
         = & \int_{-\frac{1}{\sqrt{\gamma^\prime\theta^T\Sigma_1\theta}} -\frac{\langle\theta,\boldsymbol{\mu}_1\rangle}{\sqrt{\theta^T\Sigma_1\theta}}}^{\frac{1}{\sqrt{\gamma^\prime\theta^T\Sigma_1\theta}} -\frac{\langle\theta,\boldsymbol{\mu}_1\rangle}{\sqrt{\theta^T\Sigma_1\theta}}}{1- \gamma^\prime\left(\langle\theta,\boldsymbol{\mu}_1\rangle + u\sqrt{\theta^T\Sigma_1\theta}\right)^2\mathcal{N}\left(0,1\right)}.
    \end{align}
    Where we have the following upper-bound for the expected value if $\gamma^\prime>>\frac{1}{\langle\theta,\boldsymbol{\mu}_1\rangle^2}$ :
    \begin{align} \mathbb{E}\left[\phi_{\gamma^\prime}^{c^\prime}\left(\boldsymbol{X},h_{\theta}\left(\boldsymbol{X}\right); \theta\right)\right] 
    \leq & \frac{4}{3\sqrt{\gamma^\prime}}\mathcal{Q}\left( \frac{\langle\theta,\boldsymbol{\mu}_1\rangle}{\sqrt{\theta^T\Sigma_1\theta}} -\frac{1}{\sqrt{\gamma^\prime\theta^T\Sigma_1\theta}}\right)
    \nonumber \\
    \leq &
    \frac{4e^{-\frac{\langle\theta, \boldsymbol{\mu}_1\rangle^2}{2\theta^T\Sigma_1\theta}}}{3\sqrt{\gamma^\prime}}\left(1+\frac{1}{\sqrt{\gamma^\prime\langle\theta,\boldsymbol{\mu}_1\rangle^2}}\right).
    \end{align}
    Now without loss of generality we assume that the largest eigenvalue of $\Sigma_1$ is less than or equal to $1$, then we have the following lower-bound if $\gamma^\prime>>\frac{1}{\langle\theta,\boldsymbol{\mu}_1\rangle^2}$:
    \begin{align}
        \mathbb{E}\left[\phi_{\gamma^\prime}^{c^\prime}\left(\boldsymbol{X},h_{\theta}\left(\boldsymbol{X}\right); \theta\right)\right] 
        \geq & \frac{4e^{-\frac{\langle\theta, \boldsymbol{\mu}_1\rangle^2}{2\theta^T\Sigma_1\theta}}}{3\sqrt{\gamma^\prime\theta^T\Sigma_1\theta}}\left(1-\frac{1}{\sqrt{\gamma^\prime\langle\theta,\boldsymbol{\mu}_1\rangle^2}}\right)
        \nonumber \\ 
        \geq & \frac{4e^{-\frac{\langle\theta, \boldsymbol{\mu}_1\rangle^2}{2\theta^T\Sigma_1\theta}}}{3\sqrt{\gamma^\prime}}\left(1-\frac{1}{\sqrt{\gamma^\prime\langle\theta,\boldsymbol{\mu}_1\rangle^2}}\right).
    \end{align}
\end{proof}
\begin{Alemma}
    \label{Alemma5}
    Consider any distribution $P$, loss function $\ell$, transportation cost $c$, and $\theta\in\Theta$. let us define $f_{\Theta, P}\left(\theta, \gamma\right)$ as
\begin{equation}
f_{\Theta, P}\left(\theta,\gamma\right) = \mathbb{E}_{\left(\boldsymbol{X},y\right) \sim P}\left[\phi_{\gamma}^c\left(\boldsymbol{X}, y; \theta\right)\right] - \mathbb{E}_{\left(\boldsymbol{X},y\right) \sim P}\left[\ell\left(\boldsymbol{X}, y; \theta\right)\right].
\end{equation}
Then, for the loss and cost functions described in Section $4$, $P = P_0$, where $P_0$ described in Theorem \ref{GenralTheorem}, in the manuscript, and the class of linear classifiers with $|\theta|_2=1$, the function $f_{\Theta, P}\left(\theta, \gamma\right)$ satisfies the following inequality:
    \begin{equation}
        f_{\Theta, P}\left(\theta,\gamma\right) \leq \frac{e^{-\frac{\langle\theta,\boldsymbol{\mu}_0\rangle^2}{2\theta^T\Sigma_0\theta}}}{2\gamma}.
    \end{equation}
\end{Alemma}
\begin{proof}
    For the setting described in the lemma we have:
    \begin{equation}
        u = y\langle\theta, \boldsymbol{X}\rangle \sim \mathcal{N}\left(\langle\theta,\boldsymbol{\mu}_0\rangle,\theta^T\Sigma_0\theta\right).
    \end{equation}
    Therefore we can calculate an upper-bound for $f_{\Theta, P}\left(\theta,\gamma\right)$ as follows:
    \begin{align}
        \mathbb{E}_{\left(\boldsymbol{X},y\right) \sim P_0}\left[\phi_{\gamma}^c\left(\boldsymbol{X}, y; \theta\right)\right] - \mathbb{E}_{\left(\boldsymbol{X},y\right) \sim P_0}\left[l\left(\boldsymbol{X}, y; \theta\right)\right] 
        & = \int_{u = 0}^{1/\gamma}{\frac{1}{\sqrt{2\pi\theta^T\Sigma_0\theta}}\left(1-\gamma u\right)e^{\frac{\left(u - \langle\theta,\boldsymbol{\mu}_0\rangle\right)^2}{2\theta^T\Sigma_0\theta}}\,du}
        \nonumber \\ 
        & \leq \frac{e^{-\frac{\langle\theta,\boldsymbol{\mu}_0\rangle^2}{2\theta^T\Sigma_0\theta}}}{2\gamma}.
    \end{align}
    And the proof is complete.
\end{proof}
\begin{Alemma}
    \label{Alemma6} Consider the setting in Theorem \ref{GenralTheorem} if we define $\theta^*$ as follows:
    \begin{equation}
        \theta^* \triangleq \frac{\Sigma_0^{-1}\boldsymbol{\mu}_0}{\|\Sigma_0^{-1}\boldsymbol{\mu}_0\|_2},
    \end{equation}
    then with high probability we have:
    \begin{equation*}
        \frac{1}{n}\sum_{i=1}^n{\phi_{\gamma^\prime}^{c^\prime}\left(\boldsymbol{X}_i,h_{\theta}\left(\boldsymbol{X}_i\right);\theta\right)}\leq s,
    \end{equation*}
    as long as we set:
    \begin{align}
    s =& \inf_{\theta \in \Theta}{\frac{1}{n}\sum_{\boldsymbol{X}_i \in S_1^\prime}{\phi_{\gamma^\prime}^{c^\prime}\left(\boldsymbol{X}_i,h_{\theta}\left(\boldsymbol{X}_i\right); \theta\right)}} + \frac{12\gamma^\prime \Tr\left(\hat{\Sigma}\left(S_1^\prime\right)\right)}{\sqrt{n}}  + 2\sqrt{\frac{\log{\frac{1}{\delta}}}{n}} + 16\sqrt{\frac{d}{n}} + \alpha
    \nonumber \\
    \gamma^\prime =& 2e^{-\frac{\beta}{2}\sqrt{\hat{\lambda}_{\mathrm{max}}\left(\hat{\Sigma}\left(S^\prime_1\right)\right)}}
    \end{align}
\end{Alemma}
\begin{proof}
    To prove the lemma we first give a lower-bound for $s$.
    \begin{align}
        s 
        =& \inf_{\theta \in \Theta}{\frac{1}{n}\sum_{\boldsymbol{X}_i \in S_1^\prime}{\phi_{\gamma^\prime}^{c^\prime}\left(\boldsymbol{X}_i,h_{\theta}\left(\boldsymbol{X}_i\right); \theta\right)}} + \frac{12\gamma^\prime\Tr\left(\hat{\Sigma}\left(S_1^\prime\right)\right)}{\sqrt{n}}  + 3\sqrt{\frac{\log{\frac{1}{\delta}}}{n}} + 16\sqrt{\frac{d}{n}} + \alpha
        \nonumber \\
        \geq& \inf_{\theta \in \Theta}{\mathbb{E}_{P_{1X,\theta}}\left[\phi_{\gamma^\prime}^{c^\prime}\left(\boldsymbol{X},y; \theta\right)\right]} + \frac{4\gamma^\prime\left(\Tr\left(\Sigma_1\right) + \|\boldsymbol{\mu}_1\|_2^2\right)}{\sqrt{n}}  + \sqrt{\frac{\log{\frac{1}{\delta}}}{n}} + \alpha
        \label{lowerboundForSPrimeGeneral}\\
        \geq& \frac{4e^-\frac{\boldsymbol{\mu}_1^T\Sigma_1^{-1}\boldsymbol{\mu}_1}{2}}{3\sqrt{\gamma^\prime}} + \frac{4\gamma^\prime\left(\Tr\left(\Sigma_1\right) + \|\boldsymbol{\mu}_1\|_2^2\right)}{\sqrt{n}}  + \sqrt{\frac{\log{\frac{1}{\delta}}}{n}}+ \alpha,
        \label{lowerboundForSupremumGeneral}
    \end{align}
    where inequality \ref{lowerboundForSPrimeGeneral} is due to Auxiliary Lemma \ref{Alemma3}. To complete the proof we give an upper-bound for  $\frac{1}{n}\sum_{i=1}^n{\phi_{\gamma^\prime}^{c^\prime}\left(\boldsymbol{X}_i,h_{\theta}\left(\boldsymbol{X}_i\right);\theta^*\right)}$ :
    \begin{align}
        \frac{1}{n}\sum_{i=1}^n{\phi_{\gamma^\prime}^{c^\prime}\left(\boldsymbol{X}_i,h_{\theta}\left(\boldsymbol{X}_i\right);\theta^*\right)}
        \leq& \mathbb{E}_{P_{1X,\theta}}\left[\phi_{\gamma^\prime}^{c^\prime}\left(\boldsymbol{X},y; \theta^*\right)\right] + \frac{4\gamma^\prime\left(\Tr\left(\Sigma_1\right) + \|\boldsymbol{\mu}_1\|_2^2\right)}{\sqrt{n}}  + \sqrt{\frac{\log{\frac{1}{\delta}}}{n}}
        \nonumber \\
        \leq& \frac{4e^{-\frac{\langle\theta^*, \boldsymbol{\mu}_1\rangle^2}{2\theta^{*T}\Sigma_1\theta^*}}}{3\sqrt{\gamma^\prime}} + \frac{4\gamma^\prime\left(\Tr\left(\Sigma_1\right) + \|\boldsymbol{\mu}_1\|_2^2\right)}{\sqrt{n}}  + \sqrt{\frac{\log{\frac{1}{\delta}}}{n}}.
        \label{constarintForOptimalParameter}
    \end{align}
   From \eqref{constarintForOptimalParameter} and \eqref{lowerboundForSupremumGeneral}, we can complete the proof by setting $\gamma^\prime$ as follows:
    \begin{equation}
        \gamma^\prime \geq \frac{16}{9}\left(\frac{e^{-\frac{\langle\theta^*, \boldsymbol{\mu}_1\rangle^2}{2\theta^{*T}\Sigma_1\theta^*}} - e^-\frac{\boldsymbol{\mu}_1^T\Sigma_1^{-1}\boldsymbol{\mu}_1}{2}}{\alpha}\right)^2 \approx 2e^{-\boldsymbol{\mu}_1^T\Sigma_1^{-1}\boldsymbol{\mu}_1},
    \end{equation}
    where in the above inequalities we suppose that $\alpha$ is small enough. Suppose that we know $\|\boldsymbol{\mu}_1\|_2\geq \beta\lambda_{\mathrm{max}}\left(\Sigma_1\right)$, then a good choice for $\gamma^\prime$ will be:
    \begin{equation}
        \gamma^\prime = 2e^{-\frac{\beta}{2}\sqrt{\hat{\lambda}_{\mathrm{max}}\left(\hat{\Sigma}\left(S^\prime_1\right)\right)}}
    \end{equation}
\end{proof}
\begin{Alemma}
    \label{Alemma7} Consider a positive definite $d\times d$ matrix,  $\Sigma$, and a vector $\boldsymbol{\mu} \in \mathbb{R}^d$.  For and vector $\boldsymbol{v}\in \mathbb{R}^d$, that we know:
    \begin{equation}
        \|\boldsymbol{v}\|_2 = 1,~ \boldsymbol{v}^T\Sigma_1^{-1}\boldsymbol{\mu} =0,
    \end{equation}
    we have the following:
    \begin{equation}
        \boldsymbol{v}^T\left(\Sigma - \frac{\boldsymbol{\mu}\boldsymbol{\mu}^T}{\boldsymbol{\mu}^T\Sigma^{-1}\boldsymbol{\mu}}\right)\boldsymbol{v} \geq \frac{\lambda_{\mathrm{min}}\Delta^2}{d\lambda_{\mathrm{max}}^2}.
    \end{equation}
\end{Alemma}
\begin{proof}
Suppose that we write $\boldsymbol{v}$ as follows:
\begin{equation}
    \boldsymbol{v} = \psi_1\hat{\boldsymbol{\mu}} + \psi_1\hat{\boldsymbol{\mu}}^{\perp},~ \psi_1^2+\psi_2^2 = 1,
\end{equation}
where $\hat{\boldsymbol{\mu}}$ is a unit norm vector in the direction of $\boldsymbol{\mu}$, and $\hat{\boldsymbol{\mu}}^{\perp}$ is a unit norm vector perpendicular to $\boldsymbol{\mu}$. If we use this new description of $\boldsymbol{v}$ we have:
\begin{align}
    \boldsymbol{v}^T\left(\Sigma - \frac{\boldsymbol{\mu}\boldsymbol{\mu}^T}{\boldsymbol{\mu}^T\Sigma^{-1}\boldsymbol{\mu}}\right)\boldsymbol{v} 
    =& \psi_1^2 \hat{\boldsymbol{\mu}}^T\left(\Sigma - \frac{\boldsymbol{\mu}\boldsymbol{\mu}^T}{\boldsymbol{\mu}^T\Sigma^{-1}\boldsymbol{\mu}}\right)\hat{\boldsymbol{\mu}} + \left(\psi_2^2+2\psi_1\psi_2\right)\lambda_{\mathrm{min}}\left(\Sigma\right)
    \nonumber \\
    \geq& \min\left\{\hat{\boldsymbol{\mu}}^T\left(\Sigma - \frac{\boldsymbol{\mu}\boldsymbol{\mu}^T}{\boldsymbol{\mu}^T\Sigma^{-1}\boldsymbol{\mu}}\right)\hat{\boldsymbol{\mu}}, \lambda_{\mathrm{min}}\left(\Sigma\right)\right\}.
\end{align}
To complete the proof we should find a lower bound for the first term in the right-hand side of the above inequality. To this aim we write $\hat{\boldsymbol{\mu}}$ as follows:
\begin{equation}
    \hat{\boldsymbol{\mu}}  = \sum_{i=1}^{d}{a_i\boldsymbol{u}_i},
\end{equation}
where $\boldsymbol{u}_i$ is the $i$th eigenvector of $\Sigma$, and $\lambda_i$ is its $i$th eigenvalue, such that $\lambda_1\leq\lambda_2\leq \cdots \leq \lambda_d$. With this new description for $\hat{\boldsymbol{\mu}}$ we can write:
\begin{align}
    \hat{\boldsymbol{\mu}}^T\left(\Sigma - \frac{\boldsymbol{\mu}\boldsymbol{\mu}^T}{\boldsymbol{\mu}^T\Sigma^{-1}\boldsymbol{\mu}}\right)\hat{\boldsymbol{\mu}} 
    =& \sum_{i=1}^{d}{a_i^2\lambda_i} - \frac{1}{\sum_{i=1}^d{\frac{a_i^2}{\lambda_i}}}
    \nonumber \\
    =& \sum_{i=1}^{d}{a_i^2\lambda_i} - \frac{\prod_{i=1}^d{\lambda_i}}{\sum_{i=1}^d{a_i^2\prod_{j\neq i}{\lambda_i}}}
    \nonumber \\
    =& \frac{\sum_{i,j=1}^{d}{a_i^2a_j^2\lambda_i\prod_{k\neq j}{\lambda_k}} - \prod_{i=1}^d{\lambda_i}}{\sum_{i=1}^d{a_i^2\prod_{j\neq i}{\lambda_i}}}
    \nonumber \\
    =& \frac{\left(\sum_{i=1}^{d}{a_i^4\prod_{i=1}^d{\lambda_i}} - \prod_{i=1}^d{\lambda_i}\right) + \sum_{i,j=1}^{d}{a_i^2a_j^2\lambda_i\prod_{k\neq j}{\lambda_k}}} {\sum_{i=1}^d{a_i^2\prod_{j\neq i}{\lambda_i}}}
    \nonumber \\
    =& \frac{\sum_{i,j =1}^{d}{a_i^2a_j^2\left(\lambda_i - \lambda_j\right)^2\prod_{k\neq i,j}{\lambda_k}}} {2\sum_{i=1}^d{a_i^2\prod_{j\neq i}{\lambda_i}}}
    \nonumber \\
    \geq& \frac{\lambda_{\mathrm{min}}\Delta^2}{d\lambda_{\mathrm{max}}},
\end{align}
where in the last inequality $\Delta$ is defined as follows:
\begin{equation}
    \Delta\left(\Sigma\right) = \min\left\{\lambda_i - \lambda_j\right\},~ \forall i,j :\lambda_i \neq \lambda_j.
\end{equation}
An important point in the above inequalities is that, if $\hat{\boldsymbol{\mu}}$ be in the direction of an eigenvector of $\Sigma$, then $\hat{\boldsymbol{\mu}}^T\left(\Sigma - \frac{\boldsymbol{\mu}\boldsymbol{\mu}^T}{\boldsymbol{\mu}^T\Sigma^{-1}\boldsymbol{\mu}}\right)\hat{\boldsymbol{\mu}}$ become equal to zero but in that case $\psi_1$ should be $0$. The reason is that, in this situation $\hat{\boldsymbol{\mu}}$ and $\Sigma_1^{-1}\boldsymbol{\mu}$ will be in the same direction and we know that $\hat{\boldsymbol{\mu}}$ is perpendicular to $\Sigma_1^{-1}\boldsymbol{\mu}$. therefore we have:
\begin{equation}
    \hat{\boldsymbol{\mu}}^T\left(\Sigma - \frac{\boldsymbol{\mu}\boldsymbol{\mu}^T}{\boldsymbol{\mu}^T\Sigma^{-1}\boldsymbol{\mu}}\right)\hat{\boldsymbol{\mu}} \geq \min\left\{\frac{\lambda_{\mathrm{min}}\Delta}{\lambda_{\mathrm{max}}}, \lambda_{\mathrm{min}}\right\},
\end{equation}
and the proof is complete.
\end{proof}

\section{Experiments details}
As mentioned in the manuscript, two different experiments were designed and implemented to demonstrate the performance of the proposed method. The codes are written using the Python programming language and the Pytorch 2.0 machine learning framework. The details of each test are explained below.

\subsection{Experiment on simulated data}
\subsubsection{Simulated data}
The data employed in this particular section comprises two distinct classes, namely positive and negative. These classes are generated by sampling from Gaussian distributions with 200 dimensions denoted as $N(\mu, \sigma^2I)$ and $N(-\mu, \sigma^2I)$, respectively. The parameter $\mu$ is initialized randomly such that $||\mu||_2=1$. Also, for unlabeled out-of-distribution data, two different Gaussian distributions, $N(\mu', \sigma^2I)$ and $N(-\mu', \sigma^2I)$, respectively, have been considered for positive and negative classes, which $\mu' = \mu+\alpha.v$ where $\alpha = \frac{||\mu||_2}{k}, k\in \mathbb{N}$ and $v$ is a random normalized vector.

For the training and testing data, two classes, positive and negative, have been uniformly sampled in labeled and unlabeled modes. All the test data are taken from the main distribution ($N(\mu, \sigma^2I)$ and $N(-\mu, \sigma^2I)$) and it consists of 10,000 samples, half of which belong to the positive class and the other half belong to the negative class.

\subsubsection{Modeling}
A linear model has been used to learn these data, the purpose of which is to obtain optimal values for the weight vector w. The cost function for when labeled data is used is according to \eqref{linear_labeled_loss}, where the objective is to maximize the margin of the linear classifier. 
\begin{align}
\label{linear_labeled_loss}
    l_{\mathrm{labeled}}(x, y) = \sum_{i=0}^{n} \min\left(1, \max\left(0, 1-\gamma.y_i.\left(w^T.x_i\right)\right)\right).
\end{align}
In \eqref{linear_unlabeled_loss}, $x_i$s are the feature vectors and $y_i$s are their corresponding labels, $\gamma$ is a regularization parameter for robust learning,  $w$ is the weight vector of the linear model, and $n$ is the number of labeled samples.

The cost function for unlabeled samples is defined according to \eqref{linear_unlabeled_loss}, where, due to the lack of access to labels for these samples, we use the model's predictions as their labels. Here too, the goal is to maximize the classifier's margin.
\begin{align}
\label{linear_unlabeled_loss}
    l_{\mathrm{unlabeled}}\left(x'\right) = \sum_{j=0}^{m} \min\left(1, \max\left(0, 1-\gamma'.\|w^T.x'_j\|^2\right)\right)
\end{align}
In the above equation $x'_i$s are the feature vector of unlabeled samples, $\gamma^\prime$ is a regularization parameter for robust learning, and $m$ is the number of unlabeled samples.

Finally, according to the proposed method, the combination of  previous loss functions is used as the loss function of the model, when we have both labeled and unlabeled samples. We define this new loss function as follows:
\begin{align}
\label{linear_loss_func}
    l_{\mathrm{total}}\left(x, y, x'\right) = l_{\mathrm{labeled}}\left(x, y\right) + \lambda. l_{\mathrm{unlabeled}}\left(x'\right)
\end{align}
In all of our experiments we use Adam optimizer \cite{adam_optimizer} with regularization term (weight-decay). Considering that the hyper-parameters of the problem must have an optimal value for each scenario; A random search process has been performed to find the optimum $\gamma$, $\gamma'$, $\lambda$, and weight-decay. Finally, we select a combination of hyper-parameters that achieved the highest accuracy on a validation dataset, and we report the accuracy of our model, using these hyper-parameters, on the test samples. This process is repeated for each experiment.

The details of the hyper-parameter values of the experiments reported for simulated data in the manuscript are available in the attached file \texttt{Hyperparameter-Simulated.csv}.

\subsection{Experiment on real data}
\subsubsection{Dataset}
To evaluate the performance of the proposed method on real-world datasets, we selected a set of histopathology data. The experiment is divided into two parts: one with labeled and unlabeled data from the same distribution, and another with labeled and unlabeled data from different distributions. The \textbf{NCT-CRC-HE-100K} dataset is used as the main distribution, and the \textbf{PatchCamelyon} dataset as the out-of-distribution dataset. Here are some additional explanations about each of the datasets used in the experiment:
\begin{enumerate}
    \item \textbf{NCT-CRC-HE-100K}: This dataset contains 100,000 non-overlapping 224$\times$224 pixels patches that are extracted from 86 whole slide histopathology images of human colorectal cancer (CRC) and normal tissue. It contains 9 different classes: Adipose (ADI), background (BACK), debris (DEB), lymphocytes (LYM), mucus (MUC), smooth muscle (MUS), normal colon mucosa (NORM), cancer-associated stroma (STR), colorectal adenocarcinoma epithelium (TUM) \cite{kather_jakob_nikolas_2018_1214456}.
    \item \textbf{PatchCamelyon}: This dataset contains 327,680 patches with 96$\times$96 pixels from whole slide histopathology images of lymph node sections. It contains 2 different classes that show the patch is a tumor or normal \cite{b_s_veeling_j_linmans_j_winkens_t_cohen_2018_2546921}.
\end{enumerate}

We conducted two distinct experiments to evaluate the impact of adding unlabeled data. The first experiment focused on assessing the accuracy improvement by incorporating unlabeled data sampled from the same distribution as the initial dataset. The second experiment aimed to evaluate the accuracy improvement when unlabeled data was sourced from a distribution different from the original dataset. For the experiment involving samples from the same distribution, we used the \textbf{NCT-CRC-HE-100K} dataset. On the other hand, for the experiment involving out-of-domain samples, we used the \textbf{PatchCamelyon} dataset.

In the first experiment, where both labeled and unlabeled data are from the same distribution (\textbf{NCT-CRC-HE-100K}), we evaluated the performance considering the 9 classes present in the dataset. However, in the second experiment using the \textbf{PatchCamelyon} dataset with only two labels, we needed to align the labels with the \textbf{NCT-CRC-HE-100K} dataset. For this purpose, we merged the cancer-associated stroma (STR) and colorectal adenocarcinoma epithelium (TUM) classes and labeled them as "tumor", while the remaining categories were labeled as "normal".

\subsubsection{Modeling}
As mentioned in the manuscript, the processing pipeline involves feeding the images through a pre-trained ResNet model that has been trained on the ImageNet dataset \cite{deng2009imagenet, resnetdeep}, and the output embedding is saved for each image. In these experiments, we use the implementation of \cite{hist_self_supervised} for ResNet50 to extract the embeddings. The stored embeddings are then fed to a deep neural network to perform classification. This deep network consists of four 2048 layers with LeakyReLU activation function and one layer at the end with the size of the number of classes (the input has a dimension of 1024). As shown in Algorithm \ref{alg:adversarial-perturb}; In order to obtain the adversarial perturbed inputs based on the original input, we update the initial value of the input a number of times in the direction of the gradient. Algorithm \ref{alg:train_loop} shows the training flow that uses the labeled and unlabeled data. This process can be used based on whether we are in the labeled-only mode or whether we are in the labeled and unlabeled combination mode. In this way, if we are in the labeled-only mode, the parts related to unlabeled samples will not be done, and the loss function will only include $\phi_i$.

\begin{algorithm}[hbt!]
\caption{Finding the adversarial perturbed input for original input data based on gradient ascent}
\label{alg:adversarial-perturb}
\begin{algorithmic}
\Function{Adversarial-Perturb}{$x, y, \gamma, \alpha, N_s$}
    \State $x' = x$
    \For{step = $1, ..., N_{s}$} \Comment{Gradient ascent loop}
        \State $p = model(x')$
        \State $\phi = CE(p, y) - \gamma.||x' - x||_2^2$ \Comment{CE: Cross entropy loss}
        \State $\alpha = \alpha/s$
        \State $x' = x' + \alpha \nabla_{x'}{\phi} $
    \EndFor
    \State \Return $x'$
\EndFunction
\end{algorithmic}
\end{algorithm}

\begin{algorithm}
\caption{The training loop}
\label{alg:train_loop}
\begin{algorithmic}[hbt!]
\Require Number of epochs $N_{ep}$, Number of perturbation steps $N_{s}$, Set of hyper-parameter \{$\gamma$, $\gamma'$, $\lambda$, Learning rate of perturbation $\alpha$\}
\State $L = \{(x_0, y_0), (x_0, y_0), \dots, (x_N, y_N)\}$ \Comment{Labeled data with size of N}
\State $U = \{x_0^\prime, x_1^\prime, \dots, x_M^\prime\}$ \Comment{Unlabeled data with the size of M}
\State $k = 2$
\State $L_b =$ the set of batches of $L$ \Comment{batch size = N/k}
\State $U_b =$ the set of batches of $U$ \Comment{batch size = M/k}
\For{epoch = $1, \dots, N_{ep}$}
    \For{$(x_i, y_i), x_j^\prime$ in $L_b, U_b$}
        \State $x_i^{p} = $ \Call{Adverserial-Perturb}{$x_i, y_i, \gamma, \alpha, step$}
        \State $y_j^\prime = model(x_j')$
        \State $x_j^{\prime p} = $ \Call{Adverserial-Perturb}{$x_j^\prime, y_j^\prime, \gamma', \alpha, step$}
        \State $p_i = model(x_i^{p})$
        \State $p_j = model(x_j^{\prime p})$
        \State $\phi_i = CE(p_i, y_i) - \gamma.||x_i^{p} - x_i||_2^2$
        \State $\phi_j = CE(p_j, y_j^\prime) - \gamma'.||x_j^{\prime p} - x_j||_2^2$
        \State $l = \phi_i + \lambda.\phi_j$
        \State \text{backpropagate loss ($l$) with gradient decent to the deep net and update the weightes}
    \EndFor
\EndFor
\end{algorithmic}
\end{algorithm}

To determine the optimal values for the hyperparameters, namely the Learning rate, weight decay, $\lambda$, $\alpha$, $\gamma$, and $\gamma'$, we employ a random search technique across their respective parameter spaces. Through this approach, we systematically explore various combinations of these hyperparameters and identify the most effective values in each experiment. Table \ref{tab:hyperparameter_space} presents the search space for each hyperparameter used in the experiments.

\begin{table}[hbt!]

  \caption{The hyperparameter search space that we test randomly.}
  
  \label{tab:hyperparameter_space}
  \centering
  \begin{tabular}{ccc}
    \toprule
    Hyperparameter & Search space ($i \in \mathbb{N}$) & Description \\
    \midrule 
    Learning rate & \{$10^{i} | -5 \leqslant i \leqslant -1$\} & learning rate for adam optimizer \\
    Weight decay & \{$10^{i} | -7 \leqslant i \leqslant -2$\} & regularization term for adam optimizer\\
    $\lambda$ & \{$10^{i} | -5 \leqslant i \leqslant 2$\} & coefficient of the unlabeled term in loss function \\
    $\alpha$ & \{$10^{i} | -5 \leqslant i \leqslant 1$\} & learning rate for adversarial perturbation function \\
    $\gamma$ & \{$10^{i} | -7 \leqslant i \leqslant 2$\} & coefficient of norm term in labeled loss function \\
    $\gamma'$ & \{$10^{i} | -7 \leqslant i \leqslant 2$\} & coefficient of norm term in unlabeled loss function \\
    \bottomrule
  \end{tabular}
\end{table}

The details of the hyperparameter values of the experiments reported for histopathology data in the manuscript are available in the attached file \texttt{Hyperparameter-Histopathology.csv}.

\end{document}